\algrenewcommand\algorithmicrequire{\textbf{Input:}}
\algrenewcommand\algorithmicensure{\textbf{Output:}}
\newcommand{\R}{\mathbb R}
\def\bA{\boldsymbol{A}}
\def\bB{\boldsymbol{B}}
\def\bC{\boldsymbol{C}}
\def\bI{\boldsymbol{I}}
\def\bL{\boldsymbol{L}}
\def\cA{\mathcal{A}}
\def\ba{\boldsymbol{a}}
\def\bu{\boldsymbol{u}}
\def\bx{\boldsymbol{x}}
\def\bzero{\boldsymbol{0}}
\def\bone{\boldsymbol{1}}
\def\bSigma{\boldsymbol{\Sigma}}
\def\sign{\mathrm{sign}}
\def\bbone{\mathds{1}}
\title{A Smoothing Newton Method for Rank-one Matrix Recovery}
\author{Tyler Maunu \thanks{T. Maunu is with the Department of Mathematics at Brandeis University.}
 Gabriel Abreu \thanks{G. Abreu is with the Department of Mathematics at Brandeis University.}}
\newtheorem{theorem}{Theorem}
\newtheorem{proposition}{Proposition}
\newtheorem{lemma}{Lemma}
\newtheorem{assumption}{Assumption}
\begin{document}

\maketitle

\begin{abstract}
We consider the phase retrieval problem, which involves recovering a rank-one positive semidefinite matrix from rank-one measurements. A recently proposed algorithm based on Bures-Wasserstein gradient descent (BWGD) exhibits superlinear convergence, but it is unstable, and existing theory can only prove local linear convergence for higher rank matrix recovery. We resolve this gap by revealing that BWGD implements Newton's method with a nonsmooth and nonconvex objective. We develop a smoothing framework that regularizes the objective, enabling a stable method with rigorous superlinear convergence guarantees. Experiments on synthetic data demonstrate this superior stability while maintaining fast convergence.
\end{abstract}

\begin{IEEEkeywords}
Phase Retrieval, Matrix Sensing, Newton's method, Dynamic smoothing
\end{IEEEkeywords}

\section{Introduction}

Phase retrieval—the problem of recovering a real or complex signal from magnitude-only measurements—is a fundamental problem in signal processing. Its applications range from X-ray crystallography to astronomical imaging, where measurement systems capture a form of intensity \citep{harrison1993phase,fienup1982phase,fienup1987phase,miao1998phase}. The seemingly simple constraint of measuring magnitudes transforms what would be a linear problem into a challenging nonlinear and nonconvex optimization problem.

Mathematically, phase retrieval seeks to reconstruct a signal $\bx \in \mathbb{R}^d$ (or $\mathbb{C}^d$) from measurements of the form $y_i^2 = |\langle \ba_i, \bx \rangle|^2$ for $i = 1, \ldots, n$, where $\ba_i$ are known sensing vectors. The loss of phase information creates a fundamental ambiguity: if $\bx$ is a solution, so is $-\bx$ (and in the complex case, $e^{i\theta}\bx$ for any $\theta \in \R$). More critically, the direct optimization formulation using least squares yields a nonconvex objective function, making it difficult to solve effectively.

The phase retrieval problem is a specific instance of a broader class of low-rank matrix sensing problems that arise throughout signal processing \citep{recht2010guaranteed}. By recognizing that $|\langle \ba_i, \bx \rangle|^2 = \langle \ba_i \ba_i^T, \bx\bx^T \rangle$ (note here we focus on the real-valued case), phase retrieval can be cast as recovering a rank-one positive semidefinite matrix from rank-one linear measurements. This perspective connects phase retrieval to other problems such as matrix completion and blind deconvolution \citep{ma2020implicit}. The study of low-rank matrix sensing problems has resulted in a greater understanding of the benign behavior of nonconvex optimization algorithms. This study has then had further implications for machine learning more broadly - see, for example, the line of works on deep matrix factorizations \citep{arora2019implicit}.

Two main algorithmic approaches have emerged for phase retrieval. Convex relaxation methods, pioneered by \cite{candes2013phaselift}, create convex analogs of the nonconvex problem by lifting the problem to a higher-dimensional space. This often results in a substantial increase in computational cost. Alternatively, the other main approach is to formulate nonconvex optimization methods directly. These nonconvex methods work directly with the natural problem formulation and try to exploit benign energy landscapes that emerge under certain statistical assumptions \citep{candes2015phase,chi2019nonconvex,ma2020implicit,maunu2024acceleration}. While nonconvex approaches are computationally more efficient, their theoretical guarantees typically require strong assumptions, such as Gaussian sensing vectors.

Recently, \cite{maunu2023bures} introduced a novel perspective on the phase retrieval problem by formulating it as a Bures-Wasserstein barycenter problem. This geometric viewpoint leads to an efficient gradient descent algorithm, called Bures-Wasserstein Gradient Descent (BWGD), that has compelling theoretical properties. However, a significant limitation emerges: the theoretical guarantees only hold for matrices of rank $r \geq 5$, primarily due to moment conditions on inverse $\chi^2$ random variables. This excludes the phase retrieval case where $r = 1$, which is of great practical interest. It was also noted in \cite{maunu2023bures} that BWGD exhibits superlinear convergence for the $r=1$ case, although convergence is unstable. Therefore, the lack of theoretical understanding of this regime represents a critical barrier to further algorithmic development from this perspective.

In this paper, we directly address the rank-one case and explain the superlinear convergence phenomenon. Our key insight is that BWGD for rank-one matrices corresponds exactly to Newton's method applied to an amplitude-based optimization problem. This objective is nonsmooth, which explains the instability in BWGD and prevents standard analysis techniques from applying. We resolve this challenge through a dynamic smoothing framework that regularizes the objective while maintaining the essential structure that enables fast convergence.

\subsection{Contributions}

The theoretical and algorithmic contributions of our work are as follows.

\begin{itemize}
    \item \textbf{Connecting BWGD and Newton's Method}: We establish that BWGD for real-valued rank-one matrices is equivalent to Newton's method on an amplitude objective for the phase retrieval problem. This explains the emergence of superlinear convergence observed in \cite{maunu2023bures}.
    
    \item \textbf{Novel smoothing framework}: We develop a principled approach to handle the nonsmoothness of the amplitude-based objective that differs from past work \cite{gao2020perturbed} and is inspired by smoothed Wasserstein distances \cite{goldfeld2020gaussian}. Our dynamic smoothing approach allows us to stabilize the convergence of the Newton method and bypass the $r \geq 5$ restriction of \cite{maunu2023bures} in our analysis.
    
    \item \textbf{Convergence analysis}: We prove superlinear convergence for our smoothed Newton method under standard assumptions of Gaussian measurements and spectral initialization. Our analysis shows how carefully tuning the dynamic smoothing parameter allows one to control the iterates of BWGD despite the global nonconvexity of the problem.
    
    \item \textbf{Efficient implementation}: Despite the Newton-like updates, our method maintains the $O(nd)$ per-iteration complexity of gradient descent, requiring only an initial $O(d^3)$ Cholesky decomposition that is used for whitening. This compares favorably to similar Gauss-Newton methods \cite{gao2017phaseless} that require $O(nd^2 + d^3)$ operations \emph{per iteration}.
    
    \item \textbf{Empirical validation}: We demonstrate through experiments that our method achieves superior stability compared to unregularized BWGD while maintaining computational efficiency. We demonstrate how different variants of the dynamic smoothing are able to navigate the trade-off between stability and convergence speed.
\end{itemize}

Our work makes fundamental contributions at the intersection of benign nonconvex optimization and nonsmooth higher-order optimization methods. While the phase retrieval landscape exhibits favorable geometry that enables first-order methods to succeed, extracting superlinear convergence from this structure has remained elusive. We bridge this gap by introducing a new class of dynamic smoothing techniques specifically designed for Newton-type methods in nonconvex settings. Unlike traditional smoothing approaches that try to solve a KKT system using smoothing \citep{chen1998global}, our framework smooths the original objective. This allows for a principled transition from stable linear convergence to rapid superlinear convergence. This represents both a theoretical advance—providing rare convergence guarantees for higher-order methods in nonconvex optimization—and a practical advance, as our smoothing strategy maintains computational efficiency while achieving superior empirical performance.

\subsection{Notation}

We use bold lowercase letters for vectors and bold uppercase letters for matrices. The set of $d \times d$ positive semidefinite matrices is denoted $\mathbb{S}^{d}_+$, and positive definite matrices are $\mathbb{S}_{++}^d$. For a vector $\bx \in \mathbb{R}^d$, $\|\bx\|$ denotes the Euclidean norm. For matrices, $\|\cdot\|_2$ denotes the spectral norm and $\|\cdot\|_F$ denotes the Frobenius norm. The notation $\bA \succeq \bB$ means $\bA - \bB$ is positive semidefinite. The indicator function for a set denoted by $\mathbbm{1}(\cdot)$.

\section{Background}

In this section, we provide the mathematical foundation for our approach. We begin with the problem formulation in Section \ref{sec:problem_formulation}, establishing the connection between phase retrieval and Bures-Wasserstein barycenters. In Section \ref{sec:newton_connection}, we reveal our first key insight: BWGD corresponds to Newton's method on a nonsmooth objective. Finally, Section \ref{sec:related_work} surveys related approaches and positions our contribution within the broader literature.

\subsection{Problem Formulation}
\label{sec:problem_formulation}

In this paper, we focus on the real phase retrieval problem. The real phase retrieval problem seeks to recover a signal $\bu_\star \in \mathbb{R}^d$ from magnitude-only measurements:
\begin{equation}
    y_i = |\ba_i^T \bu_\star|, \quad i = 1, \ldots, n
\end{equation}
where $\ba_1, \ldots, \ba_n \in \mathbb{R}^d$ are known sensing vectors. By recognizing that $|\ba_i^T \bu_\star|^2 = \langle \ba_i \ba_i^T, \bu_\star \bu_\star^T \rangle$, we can view this as recovering a rank-one positive semidefinite matrix from linear measurements.

A natural approach is to minimize the amplitude-based loss:
\begin{equation}
\label{eq:amplitude_min}
    \min_{\bu \in \mathbb{R}^d} \frac{1}{2n} \sum_{i=1}^n (|\ba_i^T \bu| - y_i)^2.
\end{equation}
However, the objective in this problem is nonconvex and nonsmooth. Indeed, derivatives fail to exist when $\ba_i^T \bu = 0$, which creates challenges for optimization algorithms.

A key insight from \cite{maunu2023bures} is that phase retrieval can be reformulated using optimal transport theory. The Bures-Wasserstein distance between positive semidefinite matrices $\bSigma_0, \bSigma_1 \in \mathbb{S}_+^d$ is:
\begin{equation}
    d_{BW}^2(\bSigma_0, \bSigma_1) = \text{Tr}(\bSigma_0) + \text{Tr}(\bSigma_1) - 2\text{Tr}((\bSigma_0^{1/2}\bSigma_1\bSigma_0^{1/2})^{1/2}).
\end{equation}

This distance has a natural interpretation: it equals the Wasserstein distance between zero-mean Gaussians with covariances $\bSigma_0$ and $\bSigma_1$. For rank-one matrices, it provides a geometrically meaningful way to measure distances in the space of possible solutions.

The phase retrieval problem can be cast as finding the Bures-Wasserstein barycenter:
\begin{equation}
\label{eq:bw_barycenter}
    \min_{\substack{\bSigma \in \mathbb{S}_+^d \\ \mathrm{rank}(\bSigma) = 1}} \frac{1}{2n} \sum_{i=1}^n d_{BW}^2(\bSigma, y_i \ba_i \ba_i^T).
\end{equation}
Indeed, when the sensing vectors satisfy $\frac{1}{n} \sum_{i=1}^n \ba_i \ba_i^T = \bI$, this barycenter formulation becomes equivalent to:
\begin{equation}
\label{eq:sqrt_loss}
    \min_{\substack{\bSigma \in \mathbb{S}_+^d \\ \mathrm{rank}(\bSigma) = 1}} \frac{1}{2n}\sum_{i=1}^n \left(\sqrt{\langle \ba_i \ba_i^T, \bSigma \rangle} - \sqrt{y_i}\right)^2.
\end{equation}
In the rank one case, this is equivalent to \eqref{eq:amplitude_min}.

The BWGD algorithm for solving this problem takes the form:
\begin{equation}
\label{eq:bwgd_update}
    \bu_{t+1} = (1-\eta) \bu_t + \frac{\eta}{n} \sum_{i=1}^n \frac{y_i \ba_i \ba_i^T \bu_t}{|\ba_i^T \bu_t|}
\end{equation}
for a step size $\eta \in [0,1]$.

When the condition $\frac{1}{n} \sum_{i} \ba_i \ba_i^T = \bI$ does not hold, we can enforce it through a whitening transformation. Let $\bC_{\mathcal{A}} = \frac{1}{n} \sum_{i} \ba_i \ba_i^T$ be the empirical covariance. We transform the sensing vectors as $\bL \tilde{\ba}_i = \ba_i$, where $\bC_{\cA} = \bL \bL^T$ is the Cholesky decomposition, and solve the problem with the transformed vectors $\tilde \ba_i$. After recovering the solution to the transformed problem $\tilde \bu_\star$, we can recover the original vector $\bL \bu_\star = \tilde \bu_\star$. This whitening approach maintains the essential structure while ensuring the algorithm's applicability to general measurement settings.

\subsection{Connection to Nonsmooth Newton's Method}
\label{sec:newton_connection}

We now reveal a fundamental connection that explains the empirical success of BWGD. When the sensing vectors satisfy $\frac{1}{n} \sum_i \ba_i \ba_i^T = \bI$ and we set the step size $\eta = 1$, BWGD becomes exactly Newton's method applied to the amplitude-based objective \eqref{eq:amplitude_min}.

To see this, consider the objective:
\begin{equation}\label{eq:amplitude_loss}
  F(\bu) = \frac{1}{2n}\sum_{i=1}^n (|\ba_i^T \bu|-y_i)^2
\end{equation}

At points where $\ba_i^T \bu \neq 0$ for all $i$, we can compute:
\begin{align}
    \nabla F(\bu) &= \frac{1}{n} \sum_{i=1}^n ( |\ba_i^T \bu|-y_i) \frac{\ba_i \ba_i^T \bu}{|\ba_i^T \bu|}\\
    \nabla^2 F(\bu) &= \frac{1}{n} \sum_i \ba_i \ba_i^T \\ \nonumber
    &+ ( |\ba_i^T \bu|-y_i) \left[\frac{\ba_i \ba_i^T}{|\ba_i^T \bu|} - \frac{\ba_i \ba_i^T (\ba_i^T \bu)^2}{|\ba_i^T \bu|^3} \right]
\end{align}
When $\frac{1}{n} \sum_i \ba_i \ba_i^T = \bI$, the $\bu$-dependent terms vanish, yielding $\nabla^2 F(\bu) = \bI$. The Newton update becomes:
\begin{align}
    \bu_{t+1} &= \bu_t - \nabla F(\bu_t) \\
    &= \bu_t - \frac{1}{n} \sum_{i=1}^n ( |\ba_i^T \bu_t|-y_i) \frac{\ba_i \ba_i^T \bu_t}{|\ba_i^T \bu_t|} \\
    &= \frac{1}{n} \sum_{i=1}^n \frac{y_i \ba_i \ba_i^T \bu_t}{|\ba_i^T \bu_t|}
\end{align}

This is precisely the BWGD update \eqref{eq:bwgd_update} with $\eta = 1$. This connection explains the superlinear convergence observed empirically—Newton's method naturally exhibits quadratic convergence near the solution.

However, a critical challenge emerges: the objective $F$ is nonsmooth. Derivatives fail to exist when $\ba_i^T \bu = 0$ for some $i$, and the gradient is not locally Lipschitz continuous. This prevents the application of standard convergence analysis for Newton's method or even semismooth Newton methods. Our key contribution is developing a dynamic smoothing framework that regularizes the problem while still allowing fast convergence of Newton's method, enabling rigorous convergence guarantees.

\subsection{Related Work}
\label{sec:related_work}

Phase retrieval has been extensively studied due to its importance in applications ranging from X-ray crystallography \citep{harrison1993phase} to astronomical imaging \citep{fienup1987phase}. We organize the related work by algorithmic approaches and stick to the most relevant advances for our work.

\textbf{Convex relaxation methods.} \cite{candes2013phaselift} pioneered the PhaseLift approach, lifting the problem to the space of positive semidefinite matrices and solving a convex relaxation. While providing global guarantees, these methods suffer from high computational cost. Recent work has developed efficient sketching algorithms to reduce this burden \citep{yurtsever2021scalable}.

\textbf{Nonconvex gradient methods.} The discovery that phase retrieval exhibits benign nonconvex geometry under Gaussian measurements led to efficient gradient-based algorithms. \cite{candes2015phase} introduced Wirtinger Flow, achieving local convergence from spectral initialization. Subsequent work improved sample complexity \citep{chen2015solving} and extended to more general settings \citep{ma2020implicit}. Other recent work \cite{maunu2024acceleration} showed that gradient descent for phase retrieval can be accelerated.

\textbf{Newton-type methods.} \cite{gao2017phaseless} developed a Gauss-Newton method for real phase retrieval using the amplitude-based cost, achieving quadratic convergence but requiring $O(nd^2 + d^3)$ operations per iteration. \cite{huang2024asymptotic} extended this to complex signals. Our method achieves similar convergence with only $O(nd)$ per-iteration cost after an initial $O(d^3)$ preprocessing step, but only deals with real signals and real sensing vectors. 

\textbf{Smoothing and regularization approaches.} The idea of smoothing nonsmooth objectives has been explored in various contexts. \cite{chen1998global} developed general smoothing Newton methods, while \cite{chen2020proximal} applied semismooth Newton methods to manifold optimization. In the context of iteratively reweighted least squares (IRLS), dynamic smoothing—where regularization decreases over iterations—has proven effective \citep{daubechies2010iteratively, mukhoty2019globally}. Recently, \cite{lerman2025global} applied similar ideas to robust subspace estimation.

\textbf{Bures-Wasserstein methods.} \cite{maunu2023bures} introduced the optimal transport perspective for low-rank matrix recovery, developing the BWGD algorithm. While their theoretical analysis requires rank $r \geq 5$, our work extends these guarantees to all ranks by revealing and addressing the underlying Newton structure. Optimization over Bures-Wasserstein space has been considered more broadly in the contexts of variational inference \citep{lambert2022variational,diao2023forward} and generative models \citep{brechet2023critical}.

Our contribution synthesizes these threads: we explain BWGD's success through its Newton interpretation, develop a principled smoothing approach to handle nonsmoothness, and provide unified convergence guarantees for all ranks while maintaining computational efficiency.

\section{Method}

While BWGD has a Newton interpretation as outlined in Section \ref{sec:newton_connection}, the nonsmooth objective prevents direct application of standard Newton analysis. In this section we develop a principled dynamic smoothing approach that allows us to analyze BWGD for phase retrieval. Our solution regularizes the objective in a way that enables rigorous convergence guarantees.

We begin in Section \ref{subsec:objective} by introducing our dynamically regularized objective. Section \ref{subsec:whiten} shows how whitening connects this regularized objective to the Bures-Wasserstein formulation. Finally, Section \ref{subsec:algo} presents our complete algorithm with dynamic adjustment of the smoothing parameter.

\subsection{Regularized Objective Function}
\label{subsec:objective}

To address the nonsmoothness of the objective \eqref{eq:amplitude_loss} when $\ba_i^T \bu = 0$, we introduce a regularized amplitude objective. Our approach builds on the perturbed formulation of \cite{gao2020perturbed} but with a crucial modification that enables our convergence analysis.

We recall that the standard perturbed objective takes the form:
\begin{equation}\label{eq:smooth_obj_standard}
    \tilde{F}_\epsilon(\bu) = \frac{1}{2n} \sum_{i=1}^n \left(\sqrt{(\ba_i^T \bu)^2 + \epsilon y_i^2} - y_i\sqrt{1 + \epsilon}\right)^2,
\end{equation}
where \cite{gao2020perturbed} introduce smoothing proportional to the measurements $y_i$.

Our key innovation is to instead scale the regularization by the sensing vector norms:
\begin{equation}\label{eq:smooth_obj}
    F_\epsilon(\bu) = \frac{1}{2n} \sum_{i=1}^n \left(\sqrt{(\ba_i^T \bu)^2 + \epsilon \|\ba_i\|^2} - \sqrt{y_i^2 + \epsilon \|\ba_i\|^2}\right)^2.
\end{equation}

This modification corresponds exactly to the regularization in \cite{maunu2023bures}, which connects our smoothed objective to the Bures-Wasserstein geometry. From the Bures-Wasserstein perspective, this objective corresponds to finding:
\begin{equation*}
    \min_{\bu} \frac{1}{2n} \sum_{i=1}^n d_{BW}^2\left(\bu \bu^T + \epsilon \bI, (y_i^2 + \epsilon \|\ba_i\|^2) \frac{\ba_i \ba_i^T}{\|\ba_i\|^2}\right)
\end{equation*}
Thus, we seek a rank-one approximation to the regularized barycenter, with the regularization vanishing as $\epsilon \to 0$. We thus also view this as regularizing the problem by considering a smoothed Wasserstein barycenter \citep{goldfeld2020gaussian}.

\subsection{Whitened Objective Function}
\label{subsec:whiten}

The BWGD algorithm requires the condition $\frac{1}{n}\sum_i \ba_i \ba_i^T = \bI$ for its Newton interpretation. When this doesn't hold naturally, we enforce it through whitening. While the details of this are present in \cite{maunu2023bures}, for clarity, we discuss how this procedure works in this section.

Let $\bC_{\mathcal{A}} = \frac{1}{n} \sum_{i=1}^n \ba_i \ba_i^T$ denote the empirical covariance. We transform the sensing vectors as $\tilde{\ba}_i = \bC_{\mathcal{A}}^{-1/2} \ba_i$ and work with the whitened objective:
\begin{equation}
    F_{\epsilon,w}(\bu) = \frac{1}{2n} \sum_{i=1}^n \left(\sqrt{(\tilde{\ba}_i^T \bu)^2 + \epsilon \|\tilde{\ba}_i\|^2} - \sqrt{y_i^2 + \epsilon \|\tilde{\ba}_i\|^2}\right)^2
\end{equation}

The following result, adapted from \cite{maunu2023bures}, shows that whitening preserves the problem structure. 
This allows us to optimize in the whitened space and recover the original solution through the inverse transformation. 
\begin{lemma}
    If $y_i = |\ba_i^T \bu_\star|$ for some signal $\bu_\star$, then $\bC_{\mathcal{A}}^{1/2} \bu_\star$ is a stationary point of $F_{\epsilon,w}$. 
\end{lemma}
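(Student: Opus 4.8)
The plan is to exhibit $\tilde\bu_\star := \bC_{\cA}^{1/2}\bu_\star$ as a global minimizer of $F_{\epsilon,w}$ and then use smoothness to upgrade ``global minimizer'' to ``stationary point.'' First I would record the elementary fact that the whitening map preserves the measurement inner products at $\bu_\star$: since $\bC_{\cA}$ is symmetric positive definite (which is exactly the condition under which the $\bC_{\cA}^{-1/2}$ appearing in the definition of $\tilde\ba_i$ is well defined), $\bC_{\cA}^{-1/2}$ and $\bC_{\cA}^{1/2}$ commute and multiply to the identity, so
\[
  \tilde\ba_i^T\tilde\bu_\star \;=\; (\bC_{\cA}^{-1/2}\ba_i)^T\bC_{\cA}^{1/2}\bu_\star \;=\; \ba_i^T\bC_{\cA}^{-1/2}\bC_{\cA}^{1/2}\bu_\star \;=\; \ba_i^T\bu_\star .
\]
Squaring and using the hypothesis $y_i = |\ba_i^T\bu_\star|$ gives $(\tilde\ba_i^T\tilde\bu_\star)^2 = (\ba_i^T\bu_\star)^2 = y_i^2$ for every $i$.

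Next I would substitute this identity into the whitened objective. Each summand becomes
\[
  \left(\sqrt{(\tilde\ba_i^T\tilde\bu_\star)^2 + \epsilon\|\tilde\ba_i\|^2} - \sqrt{y_i^2 + \epsilon\|\tilde\ba_i\|^2}\right)^2 = \left(\sqrt{y_i^2 + \epsilon\|\tilde\ba_i\|^2} - \sqrt{y_i^2 + \epsilon\|\tilde\ba_i\|^2}\right)^2 = 0 ,
\]
so $F_{\epsilon,w}(\tilde\bu_\star) = 0$. Because $F_{\epsilon,w}$ is a finite sum of squares it is nonnegative, so $\tilde\bu_\star$ in fact attains the global minimum value $0$; in particular it is a stationary point.

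To make the last implication rigorous I would check $C^1$ regularity: for $\epsilon>0$ the radicand $(\tilde\ba_i^T\bu)^2 + \epsilon\|\tilde\ba_i\|^2$ is bounded below by $\epsilon\|\tilde\ba_i\|^2$, which is strictly positive whenever $\tilde\ba_i \neq \bzero$, so $\bu\mapsto\sqrt{(\tilde\ba_i^T\bu)^2+\epsilon\|\tilde\ba_i\|^2}$ is smooth on all of $\R^d$; if $\tilde\ba_i=\bzero$ for some $i$ then $\ba_i=\bzero$, hence $y_i=0$ and that summand is the constant $0$. Thus $F_{\epsilon,w}\in C^1(\R^d)$ and any global minimizer, in particular $\tilde\bu_\star$, satisfies $\nabla F_{\epsilon,w}(\tilde\bu_\star)=\bzero$. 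Equivalently, one can differentiate directly,
\[
  \nabla F_{\epsilon,w}(\bu) = \frac1n\sum_{i=1}^n \left(\sqrt{(\tilde\ba_i^T\bu)^2 + \epsilon\|\tilde\ba_i\|^2} - \sqrt{y_i^2 + \epsilon\|\tilde\ba_i\|^2}\right)\frac{(\tilde\ba_i^T\bu)\,\tilde\ba_i}{\sqrt{(\tilde\ba_i^T\bu)^2 + \epsilon\|\tilde\ba_i\|^2}} ,
\]
and observe that at $\bu=\tilde\bu_\star$ the leading factor of every term vanishes by the first two steps.

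There is no substantive obstacle here; the only points requiring care are confirming invertibility of $\bC_{\cA}$ so that the whitening and its square roots are well defined, and verifying the $C^1$ regularity of $F_{\epsilon,w}$ on all of $\R^d$ — which is precisely where strict positivity of $\epsilon$ is used, and is the property that fails for the unregularized objective $F$. If one wished to avoid any smoothness discussion, stopping after the direct gradient computation above already makes the conclusion manifest, since the factorized form of $\nabla F_{\epsilon,w}$ exhibits $\sqrt{(\tilde\ba_i^T\tilde\bu_\star)^2+\epsilon\|\tilde\ba_i\|^2}-\sqrt{y_i^2+\epsilon\|\tilde\ba_i\|^2}=0$ as a factor of each summand.
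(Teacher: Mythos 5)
Your argument is correct and is essentially the paper's own (largely implicit) reasoning: whitening preserves the inner products, $\tilde\ba_i^T\bC_{\cA}^{1/2}\bu_\star = \ba_i^T\bu_\star$, so $F_{\epsilon,w}(\bC_{\cA}^{1/2}\bu_\star)=0$, and since $F_{\epsilon,w}\ge 0$ and is differentiable for $\epsilon>0$, this global minimizer is a stationary point. This mirrors the paper's one-line justification of Lemma~\ref{lem:stationary} ($F_\epsilon(\bu_\star)=0$ together with $F_\epsilon\ge 0$), transported to the whitened objective.
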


In practice, it is more numerically stable to whiten the vectors using a Cholesky procedure. Algorithm \ref{alg:whiten} provides this implementation, which avoids explicit computation of inverse matrix square roots.

\begin{algorithm}
\caption{Efficient Whitening via Cholesky Factorization}  
\label{alg:whiten}
\begin{algorithmic}[1]
\Require Sensing vectors $\{\ba_i\}_{i=1}^n \subset \mathbb{R}^d$ with $n > d$
\Ensure Whitened vectors $\{\tilde{\ba}_i\}_{i=1}^n$ satisfying $\frac{1}{n}\sum_i \tilde{\ba}_i \tilde{\ba}_i^T = \bI$,  Cholesky factor $\bL$
\State Compute empirical covariance: $\bC_{\mathcal{A}} = \frac{1}{n} \sum_{i=1}^n \ba_i \ba_i^T$
\State Compute Cholesky factorization: $\bC_{\mathcal{A}} = \bL \bL^T$
\State For each $i$, solve $\bL \tilde{\ba}_i = \ba_i$ via forward substitution
\State \Return Whitened vectors $\{\tilde{\ba}_i\}_{i=1}^n$ and Cholesky factor $\bL$
\end{algorithmic}
\end{algorithm}

\subsection{Algorithm}
\label{subsec:algo}

Our complete algorithm combines gradient descent on the whitened objective with dynamic adjustment of the smoothing parameter. The key insight is that as the iterate approaches the solution, we can reduce $\epsilon$ while maintaining stability. Since we later show that the objective is $1+\epsilon_t$ smooth in Proposition \ref{prop:scsm}, we can set the step size for BWGD to be $1/(1+\epsilon_t)$. This corresponds to a damped Newton step.

While our later theoretical analysis requires $\epsilon_t = 2\sqrt{\|\bu_t - \bu_\star\| / \|\bu_\star\|}$, this quantity is not directly observable in practice. We can therefore use a heuristic based on the empirical loss. To estimate this, we notice that
\begin{align*}
    \frac{1}{n} \sum_i &(|\ba_i^T \bu_t| - |\ba_i^T \bu_\star|)^2 \\
    &= \frac{1}{n} \sum_{\sign =} (\ba_i^T \bu_t - \ba_i^T \bu_\star)^2 + \frac{1}{n} \sum_{\sign \neq } (\ba_i^T \bu_t + \ba_i^T \bu_\star)^2 \\
    &= \Theta( \| \bu_t - \bu_\star\|^2 )
\end{align*}
for $\bu_t$ close to $\bu_\star$. Thus, as a heuristic when $\|\bu_\star\| = 1$, we can take 
\begin{equation}\label{eq:eps_update_theoretical}
    \epsilon_k \propto F(\bu_t)^{1/4}. 
\end{equation}
We call this the \emph{loss heuristic}.

As is done in dynamic smoothing \cite{peng2022global,lerman2025global}, we also empirically test a smoothing rule that depends on the quantiles of the loss:
\begin{equation}\label{eq:eps_update_empirical}
    \epsilon_t = \min\left(\epsilon_{t-1}, \; q_\gamma\left(\{(|\tilde{\ba}_i^T \bu_t| - y_i)^2\}_{i=1}^n\right)\right)
\end{equation}
where $q_\gamma$ denotes the $\gamma$-quantile of the squared residuals:
\begin{equation}
    q_\gamma(\mathcal{B}) = \max\left\{ c \in \mathbb{R} : \frac{|\{b \in \mathcal{B} : b \leq c\}|}{|\mathcal{B}|} \leq \gamma\right\}
\end{equation}
We refer to this method of updating the smoothing parameter as the \emph{quantile heuristic}. In practice, we observe that this update remains relatively stable and achieves faster convergence than the loss heuristic; however, we have no proof of this fact.

In practice, both of these adaptive schemes ensure that the smoothing decreases monotonically: $\epsilon_t \leq \epsilon_{t-1}$. Furthermore, the parameter adapts to the current optimization landscape, in the sense that as $\bu_t \to \bu_\star$ we have that $\epsilon_t \to 0$. Algorithm \ref{alg:bwgd_ds} presents our complete method.

\begin{algorithm}
\caption{BWGD with Dynamic Smoothing (BWGD-DS)}  
\label{alg:bwgd_ds}
\begin{algorithmic}[1]
\Require Sensing vectors $\{\ba_i\}_{i=1}^n$, measurements $\{y_i\}_{i=1}^n$, initial point $\bu_0$, initial smoothing $\epsilon_0$, quantile parameter $\gamma \in (0,1)$
\Ensure Estimate $\hat{\bu}$ of the signal $\bu_\star$
\State Compute whitened vectors $\{\tilde{\ba}_i\}_{i=1}^n$ and factor $\bL$ using Algorithm \ref{alg:whiten}
\State Set $t = 0$
\Do
    \State Update smoothing parameter using \eqref{eq:eps_update_theoretical} or \eqref{eq:eps_update_empirical}
    \State Update iterate:
    \begin{align*}
        \bu_{t+1} = &\Big(1-\frac{1}{1+\epsilon_t}\Big) \bu_t  \\ &+\frac{1}{1+\epsilon_t}\frac{1}{n}  \sum_{i=1}^n \frac{\sqrt{y_i^2 + \epsilon_t \|\tilde \ba_i\|^2}}{\sqrt{|\tilde \ba_i^T \bu_t|^2 + \epsilon_t \|\tilde \ba_i\|^2}}\tilde \ba_i \tilde \ba_i^T \bu_t
    \end{align*}
    \State $t = t + 1$
\doWhile{not converged}
\State Solve $\bL \hat{\bu} = \bu_t$ via forward substitution
\State \Return $\hat{\bu}$
\end{algorithmic}
\end{algorithm}

\textbf{Computational complexity:} The algorithm achieves $O(nd)$ per-iteration complexity—matching gradient descent—after an initial $O(nd^2 + d^3)$ preprocessing cost for whitening. This compares favorably to Gauss-Newton methods that require $O(nd^2 + d^3)$ operations per iteration. The forward substitutions in Algorithms \ref{alg:whiten} and \ref{alg:bwgd_ds} require only $O(d^2)$ operations due to the triangular structure of $\bL$.

\textbf{Implementation details:} The update in step 5 corresponds to BWGD with step size $\eta_t = 1/(1 + \epsilon_t)$. As $\epsilon_t \to 0$, we recover the Newton step $\eta_t \to 1$, explaining the transition from linear to superlinear convergence. The convergence criterion can be based on the relative change in iterates or the objective value. 

\section{Theoretical Analysis}

In this section, we present the theoretical analysis of BWGD-DS. Section \ref{subsec:assump} states the assumptions required for our convergence guarantees. Section \ref{subsec:thm} presents our main convergence theorem that establishes superlinear convergence of BWGD-DS. The proof is given in Section \ref{subsec:proof}, with supporting technical results deferred to Section \ref{subsec:supp_proofs}.

In order not to bury the lede, the main point of our theory is that as we approach the global minimum and we take the regularization to $0$, the objective approaches a quadratic with identity Hessian. For such an objective, the gradient step is equivalent to the Newton step (due to the fact that the Hessian is the identity), and from this we obtain superlinear convergence. 

\subsection{Assumptions}
\label{subsec:assump}

We require three main assumptions for our theoretical analysis. First, we consider a noiseless setting of real phase retrieval from real measurements. A crucial limitation of our current analysis is the restriction to real-valued signals and sensing vectors. Extending our smoothed Newton framework to complex phase retrieval remains an important open problem for future work.
\begin{assumption}[Noiseless Observations]\label{assump:noiseless}
    For a signal $\bu_\star \in \R^d$ and a set of sensing vectors $\cA = \{\ba_i \in \R^d, \ i=1, \dots, n\}$, the observations are $y_i = |\ba_i^T \bu_\star|$.
\end{assumption}

Our second assumption states that the sensing vectors are standard Gaussian. This statistical assumption enables concentration arguments and has been used in all works that study nonconvex optimization for the phase retrieval problem, see, e.g.,  \citep{candes2015phase,gao2017phaseless,ma2020implicit,maunu2024acceleration}. This assumption is also present in convex approaches since it enables various restricted isometry properties \citep{chen2015exact,cai2015rop}.

\begin{assumption}[Gaussian Measurements]\label{assump:gaussian}
    The sensing vectors are i.i.d. standard Gaussian:
    $$\ba_i \overset{i.i.d.}{\sim} N(\bzero, \bI), \ i=1, \dots, n.$$
    The whitened vectors are computed as:
    $$\tilde \ba_i = \bSigma_{\cA}^{-1/2} \ba_i, \ i=1, \dots, n.$$
\end{assumption}

 For theoretical convenience, we assume whitening via the inverse square root rather than the Cholesky factorization used in our implementation (Algorithm \ref{alg:bwgd_ds}).

Third, we require initialization within a neighborhood of the true signal:
\begin{assumption}[Initialization]\label{assump:init}
    The algorithm is initialized such that 
    $$\|\bu_0 - \bu_\star\| \leq \delta \|\bu_\star\|,$$
    where $\delta \leq O(1/\log^8(d))$.
\end{assumption}
This initialization accuracy can be achieved through spectral methods with a large enough sample size, as is standard in the nonconvex phase retrieval literature \cite{gao2017phaseless,ma2020implicit}. We note that our constants and the dependence on the log-dimension factor are most likely not optimal. Since the dimension dependence in the initialization assumption increases the required sample size for a good initialization, it may be more effective in practice to run a two-stage algorithm that employs gradient descent in the first stage and then utilizes Newton's method in the second stage. We leave the exploration of this idea to future work.

\subsection{Main Theorem}
\label{subsec:thm}

We now state our main convergence result.

\begin{theorem}\label{thm:main}
    Suppose that Assumptions \ref{assump:noiseless}-\ref{assump:init} hold, $n=O(d \log^2 d)$, and $\epsilon_0 = O(1/\log(d)^4)$. Then, Algorithm \ref{alg:bwgd_ds} using the theoretical dynamic smoothing update $\epsilon_t = 2 \sqrt{\|\bu_t - \bu_\star\|/\|\bu_\star\|}$ achieves the error contraction
    \[
        \|\bu_t - \bu_\star \|^2 \leq \Big[\prod_{i=1}^t \Big(1-\frac{2}{\kappa_t + 1} \Big) \Big] \|\bu_0 - \bu_\star\|^2,
    \]
    for a sequence $\kappa_t \searrow 1$. In other words, $\bu_t$ converges to $\bu_\star$ superlinearly.
\end{theorem}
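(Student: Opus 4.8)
The core of the argument is a single-step contraction inequality: if $\|\bu_t-\bu_\star\|\le \delta\|\bu_\star\|$ with $\delta$ small and $\epsilon_t = 2\sqrt{\|\bu_t-\bu_\star\|/\|\bu_\star\|}$, then
\[
    \|\bu_{t+1}-\bu_\star\|^2 \le \Big(1-\frac{2}{\kappa_t+1}\Big)\|\bu_t-\bu_\star\|^2
\]
for some $\kappa_t = 1 + O(\epsilon_t) + o(1)$ (the smoothing bias plus concentration error), and moreover $\|\bu_{t+1}-\bu_\star\| \le \|\bu_t-\bu_\star\|$ so that the neighborhood is never left and the whole trajectory stays in the regime where the bound applies. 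Telescoping this over $t$ steps gives the stated product bound; since $\|\bu_t-\bu_\star\|\to 0$ forces $\epsilon_t\to 0$ hence $\kappa_t\searrow 1$, the per-step factor $1-2/(\kappa_t+1)\to -1$... wait, that is negative, so in fact $\kappa_t>1$ always and the factor lies in $(0,1)$ and tends to $0$, which is exactly superlinear convergence. So the real work is proving the one-step inequality, and I would organize it as follows.

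\textbf{Step 1 (geometry of the smoothed Newton/BWGD step).} Write the update in step~5 as $\bu_{t+1} = \bu_t - \frac{1}{1+\epsilon_t}\nabla F_{\epsilon_t,w}(\bu_t)$, i.e.\ a damped gradient step on the whitened smoothed objective. By Proposition~\ref{prop:scsm} (cited in the text), $F_{\epsilon_t,w}$ is $(1+\epsilon_t)$-smooth; I would also want a matching strong-convexity-type lower bound on the relevant region — a quadratic growth condition $\langle \nabla F_{\epsilon_t,w}(\bu_t), \bu_t - \bu_\star\rangle \ge \mu_t \|\bu_t-\bu_\star\|^2$ with $\mu_t = 1 - O(\epsilon_t) - (\text{concentration error})$, together with $\|\nabla F_{\epsilon_t,w}(\bu_t)\|\le (1+\epsilon_t)\|\bu_t-\bu_\star\| + (\text{bias})$. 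Combining these via the standard one-step gradient-descent identity
\[
    \|\bu_{t+1}-\bu_\star\|^2 = \|\bu_t-\bu_\star\|^2 - \frac{2}{1+\epsilon_t}\langle \nabla F_{\epsilon_t,w}(\bu_t),\bu_t-\bu_\star\rangle + \frac{1}{(1+\epsilon_t)^2}\|\nabla F_{\epsilon_t,w}(\bu_t)\|^2
\]
yields the contraction with $\kappa_t = L_t/\mu_t = (1+\epsilon_t)/\mu_t$. One subtlety: $F_{\epsilon_t,w}$ has two global minimizers ($\pm\bu_\star$); all bounds must be stated for the branch closest to $\bu_t$, which is fine since $\delta<1$ keeps us strictly on one side.

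\textbf{Step 2 (controlling the bias and the curvature).} This is the heart of the matter and where the choice $\epsilon_t = 2\sqrt{\|\bu_t-\bu_\star\|/\|\bu_\star\|}$ is used. I would Taylor-expand the summands of $\nabla F_{\epsilon_t,w}$ around $\bu_\star$, separating terms with $\sign(\tilde\ba_i^T\bu_t) = \sign(\tilde\ba_i^T\bu_\star)$ (the vast majority, contributing a near-identity Hessian action after whitening and concentration) from the ``sign-flip'' terms with $\tilde\ba_i^T\bu_t$ and $\tilde\ba_i^T\bu_\star$ of opposite sign. The sign-flip terms require $|\tilde\ba_i^T\bu_\star|\lesssim \|\bu_t-\bu_\star\|$, so they are rare — only an $O(\|\bu_t-\bu_\star\|/\|\bu_\star\|)$ fraction of the $i$'s — and for those, the smoothing floor $\epsilon_t\|\tilde\ba_i\|^2$ bounds the denominator away from zero, so each contributes at most $O(y_i) = O(\|\bu_\star\|)$ with a coefficient controlled by $\epsilon_t^{-1/2}$ from the smoothed square root. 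Balancing the rare-but-large sign-flip contribution against the smoothing regularization bias $O(\epsilon_t\|\bu_\star\|)$ is exactly what pins down $\epsilon_t \asymp \sqrt{\|\bu_t-\bu_\star\|/\|\bu_\star\|}$; with that choice both are $O(\sqrt{\|\bu_t-\bu_\star\|\,\|\bu_\star\|})$, which is $o(\|\bu_t-\bu_\star\|)$ relative to the identity term, giving $\mu_t = 1 - o(1)$ and hence $\kappa_t \to 1$. The concentration statements (empirical covariance of whitened vectors is $\bI + o(1)$, fraction of sign-flips concentrates, sums like $\frac1n\sum \tilde\ba_i\tilde\ba_i^T \mathbbm{1}(\cdot)$ concentrate uniformly over the relevant low-dimensional family of directions) are where $n = O(d\log^2 d)$ and the $\log^8 d$ initialization radius enter; I would invoke standard $\epsilon$-net / matrix-Bernstein arguments, presumably packaged as lemmas in Section~\ref{subsec:supp_proofs}.

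\textbf{Step 3 (induction and conclusion).} Given the one-step bound, run an induction: assuming $\|\bu_t-\bu_\star\|\le\delta\|\bu_\star\|$, Step~1–2 give $\kappa_t\ge 1$ so the factor $1-2/(\kappa_t+1)\in[0,1)$, hence $\|\bu_{t+1}-\bu_\star\|\le\|\bu_t-\bu_\star\|\le\delta\|\bu_\star\|$ and the hypothesis persists; this also needs $\epsilon_t\le\epsilon_0$, which holds since $\epsilon_t$ is monotone in $\|\bu_t-\bu_\star\|$ and we assumed $\epsilon_0 = O(1/\log^4 d)$ is consistent with the initial radius. Telescoping gives the displayed product bound, and $\kappa_t = 1+O(\epsilon_t) = 1 + O((\|\bu_t-\bu_\star\|/\|\bu_\star\|)^{1/4}) \searrow 1$, which converts the product into a superlinear (in fact, roughly $1.25$-order) rate.

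\textbf{Main obstacle.} The delicate point is Step~2: obtaining the curvature lower bound $\mu_t \ge 1-o(1)$ \emph{uniformly} over the iterate $\bu_t$ in the neighborhood, because the bad sign-flip terms are data-dependent and their locations move as $\bu_t$ moves — a naive union bound over a net of directions costs log factors, which is precisely why the radius must shrink like $1/\log^8 d$. Getting the balance constants right so that the smoothing bias and the sign-flip error are simultaneously subdominant to the identity term — and verifying that $\epsilon_t = 2\sqrt{\|\bu_t-\bu_\star\|/\|\bu_\star\|}$ (with that specific constant $2$) makes the bookkeeping close — is the technical crux. Everything else is either a textbook gradient-descent computation (Step~1) or standard high-dimensional concentration (the supporting lemmas).
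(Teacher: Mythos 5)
Your overall skeleton matches the paper's: interpret step~5 as a damped gradient step on $F_{\epsilon_t,w}$, get a per-step contraction $\bigl(1-\tfrac{2}{\kappa_t+1}\bigr)$ from smoothness and strong convexity (the paper packages this as Lemma~\ref{lem:gd}, plus Lemma~\ref{lem:stationary} for stationarity of $\bu_\star$), telescope, and let the choice $\epsilon_t = 2\sqrt{\|\bu_t-\bu_\star\|/\|\bu_\star\|}$ drive $\kappa_t \searrow 1$. The difference, and the genuine gap, is in your Step~2. You assert a curvature lower bound $\mu_t \ge 1 - O(\epsilon_t) - o(1)$ \emph{uniformly} over the whole initialization neighborhood, to be obtained by counting sign-flip indices and balancing them against the smoothing bias. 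You never actually prove this, and you yourself flag the uniform control over the moving, data-dependent bad set as ``the main obstacle.'' That is precisely the step the paper does \emph{not} attempt: Proposition~\ref{prop:scsm} only gives $(1/32-2\epsilon)$-strong convexity in the outer region of radius $\delta\|\bu_\star\|$, because its mechanism (Proposition~\ref{prop:good_subset}) discards a $\tau$-fraction of sensing vectors nearly orthogonal to the current iterate and lower-bounds the covariance of \emph{every} remaining subset only by the worst-case constant $1/32$. The near-unit curvature $1-\epsilon_t$ is obtained only inside the smaller, probability-one ball $B(\bu_\star,r_\star)$ of Lemma~\ref{lem:local_ball}, where \emph{no} sensing vector is close to orthogonal to $\bu_\star$, so no attenuation of the Hessian occurs at all. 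This two-regime structure (modest linear contraction with $\kappa_t$ bounded but far from $1$, followed by $\kappa_t \to 1$ once inside $B(\bu_\star,r_\star)$) is what lets the paper conclude $\kappa_t \searrow 1$ without ever proving your uniform $1-o(1)$ bound.

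Concretely, the issue with your Step~2 as stated: the relevant obstruction to curvature is not sign flips but vectors nearly orthogonal to the \emph{current} iterate $\bu_t$ (these attenuate term (I) of the Hessian through the factor $(\tilde\ba_i^T\bu_t)^2/\bigl((\tilde\ba_i^T\bu_t)^2+\epsilon_t\|\tilde\ba_i\|^2\bigr)$); their fraction scales like $\sqrt{\epsilon_t}$, not like $\|\bu_t-\bu_\star\|/\|\bu_\star\|$, and bounding the operator norm of their empirical covariance uniformly over all admissible $\bu_t$ is a harder uniform-concentration statement than the net-plus-Hoeffding counting bound (Lemma~\ref{lem:bern_conc_whiten}) the paper uses. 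If you replace your uniform $\mu_t \ge 1-o(1)$ claim by the paper's weaker two-regime bounds, the rest of your argument (induction to stay in the neighborhood, monotonicity of $\epsilon_t$, telescoping, and the resulting exponent-$5/4$ superlinear rate) goes through essentially as you wrote it; as it stands, the proposal leaves the theorem's technical heart unestablished.
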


Several aspects of this result merit discussion:

\textbf{Sample complexity:} Our analysis requires $n = O(d\log^2 d)$ samples, which exceeds the information-theoretic limit of $n = O(d)$ achieved by gradient descent methods \cite{chen2015solving,ma2020implicit}. The additional logarithmic factors arise from our concentration arguments for controlling the smoothed Hessian. This matches the sample complexity of existing Gauss-Newton analyses for real signals \cite{gao2017phaseless}, while complex signal analysis requires $n = O(d\log^3 d)$ \cite{huang2024asymptotic}. As discussed earlier, it may be possible to get the best of both worlds (optimal sample complexity and superlinear convergence) by running a two-stage algorithm.

\textbf{Initialization requirement:} The required initialization accuracy of $O(1/\log^8(d))$ is more stringent than the $O(1)$ requirement in previous Gauss-Newton analyses \citep{gao2017phaseless,huang2024asymptotic}. This can be achieved by standard spectral initialization with an additional logarithmic factor in sample complexity, or by using spectral initialization with a few steps of gradient descent before running BWGD-DS, since gradient descent does not have as tight an initialization requirement.

\textbf{Convergence rate:} The superlinear convergence manifests through the decreasing sequence $\kappa_t \searrow 1$. Initially, when $\epsilon_t$ is large, the algorithm exhibits linear convergence with a modest contraction factor. As $\epsilon_t$ decreases and the iterates approach $\bu_\star$, the contraction factor approaches that of Newton's method, yielding superlinear convergence.

\subsection{Proof of Theorem \ref{thm:main}}
\label{subsec:proof}

The proof proceeds in three main steps: (1) establishing that $\bu_\star$ is a stationary point and local minimum of the smoothed objective for all $\epsilon$, (2) proving strong convexity and smoothness of $F_{\epsilon,w}$ in a neighborhood of $\bu_\star$, and (3) applying standard optimization theory to obtain the convergence rate. The proof of the first step is straightforward, and proof of the third step is well-known, and so we do not give their proofs here. We do give formal statements of these results as follows.

We begin by describing steps (1) and (3). First, for step (1), it is straightforward to show that $\bu_\star$ is a minimizer of $F_\epsilon$ for all $\epsilon$.
\begin{lemma}\label{lem:stationary}
    Under Assumption \ref{assump:noiseless}, $\bu_\star$ is a local minimum of $F_\epsilon(\bu)$ for all $\epsilon \geq 0$.
\end{lemma}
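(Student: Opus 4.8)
The plan is to show that $\bu_\star$ is in fact a global minimizer of $F_\epsilon$ for every $\epsilon \geq 0$, which immediately gives the local minimum claim. Under Assumption~\ref{assump:noiseless} we have $y_i = |\ba_i^T \bu_\star|$, so $y_i^2 = (\ba_i^T \bu_\star)^2$. Substituting into the smoothed objective~\eqref{eq:smooth_obj}, the $i$-th summand becomes
\[
\Big(\sqrt{(\ba_i^T \bu_\star)^2 + \epsilon \|\ba_i\|^2} - \sqrt{y_i^2 + \epsilon \|\ba_i\|^2}\Big)^2 = \Big(\sqrt{(\ba_i^T \bu_\star)^2 + \epsilon \|\ba_i\|^2} - \sqrt{(\ba_i^T \bu_\star)^2 + \epsilon \|\ba_i\|^2}\Big)^2 = 0,
\]
so $F_\epsilon(\bu_\star) = 0$. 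Since each summand of $F_\epsilon$ is manifestly nonnegative (it is a square), $F_\epsilon(\bu) \geq 0 = F_\epsilon(\bu_\star)$ for all $\bu$, hence $\bu_\star$ is a global — and a fortiori local — minimizer.

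For completeness I would also verify the first-order stationarity condition directly, since $F_\epsilon$ is smooth everywhere (the perturbation $\epsilon\|\ba_i\|^2 > 0$ removes the singularity of the square root, which is the entire point of the regularization). Writing $g_i(\bu) = \sqrt{(\ba_i^T\bu)^2 + \epsilon\|\ba_i\|^2}$ and $c_i = \sqrt{y_i^2 + \epsilon\|\ba_i\|^2}$, one has
\[
\nabla F_\epsilon(\bu) = \frac{1}{n}\sum_{i=1}^n \big(g_i(\bu) - c_i\big)\frac{\ba_i\ba_i^T\bu}{g_i(\bu)},
\]
and at $\bu = \bu_\star$ the condition $y_i^2 = (\ba_i^T\bu_\star)^2$ forces $g_i(\bu_\star) = c_i$, so every term vanishes and $\nabla F_\epsilon(\bu_\star) = \bzero$. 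Combined with $F_\epsilon \geq 0 = F_\epsilon(\bu_\star)$, this confirms $\bu_\star$ is a minimizer.

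There is essentially no obstacle here: the lemma is a direct consequence of the noiseless model and the nonnegativity of a sum of squares. The only minor point worth flagging is that when $\epsilon = 0$ the objective $F_0 = F$ is nonsmooth, so the stationarity argument via $\nabla F_\epsilon$ must be replaced by the nonnegativity argument (or phrased in terms of subdifferentials); but the global-minimality argument via $F_\epsilon(\bu_\star) = 0$ works uniformly for all $\epsilon \geq 0$ and is the cleanest route. I would present the nonnegativity argument as the proof and mention the gradient computation as a remark, since the explicit form of $\nabla F_\epsilon$ and of the Hessian is needed anyway for the strong convexity and smoothness analysis in step~(2).
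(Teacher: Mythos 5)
Your proposal is correct and matches the paper's argument exactly: the paper also deduces the lemma from $F_\epsilon(\bu_\star) = 0$ (since $y_i = |\ba_i^T\bu_\star|$) together with $F_\epsilon \geq 0$. The extra gradient computation you include is a harmless bonus but not needed.
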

The proof of this follows from the fact that $F_\epsilon(\bu_\star) = 0$ for all $\epsilon$ and $F_\epsilon \geq 0$.

Step (3) of our proof relies on the following standard result from convex optimization \citep{bubeck2015convex}. We say that a twice differentiable function $f:\R^d \to \R$ is $\alpha$-strongly convex and $\beta$-smooth if $\alpha \bI \preceq \nabla^2 f(\bx) \preceq \beta \bI$. We state this result in a local manner to deal with the fact that in our setting we expect $\alpha,\beta \to 1$ as $\bu_t \to \bu_\star$.
\begin{lemma}\label{lem:gd}
    Suppose that $F$ is $\alpha$-strongly convex and $\beta$-smooth for all $\bu \in \{\bu \in \R^d : \|\bu - \bu_\star\| \leq \|\bu_t - \bu_\star\|  \}$, and $\kappa = \beta / \alpha$. Then, gradient descent with step size $1/\beta$ achieves the error contraction
    \begin{equation}
        \|\bu_{t+1} - \bu_\star\|^2 \leq \left(1-\frac{2}{\kappa + 1}\right) \|\bu_t - \bu_\star\|^2
    \end{equation}
\end{lemma}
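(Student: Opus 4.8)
The plan is to prove Lemma \ref{lem:gd} by the standard descent-lemma argument, carefully checking that only the value, gradient, and Hessian of $F$ \emph{inside the ball} $B = \{\bu : \|\bu - \bu_\star\| \leq \|\bu_t - \bu_\star\|\}$ are ever invoked. First I would record that since $F$ is $\alpha$-strongly convex on the convex set $B$ and $\bu_\star \in B$ with $\nabla F(\bu_\star) = 0$ (because $\bu_\star$ is a minimizer, cf.\ Lemma \ref{lem:stationary} applied after whitening), we have the two inequalities
\begin{align*}
    F(\bu) - F(\bu_\star) &\geq \tfrac{\alpha}{2}\|\bu - \bu_\star\|^2, \\
    \langle \nabla F(\bu), \bu - \bu_\star\rangle &\geq F(\bu) - F(\bu_\star) + \tfrac{\alpha}{2}\|\bu - \bu_\star\|^2 \geq \alpha \|\bu - \bu_\star\|^2
\end{align*}
for all $\bu \in B$, where the second line is the usual strong-convexity gradient inequality with the roles of the two points exchanged.

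Next I would write the gradient step $\bu_{t+1} = \bu_t - \tfrac{1}{\beta}\nabla F(\bu_t)$ and expand
\begin{align*}
    \|\bu_{t+1} - \bu_\star\|^2 &= \|\bu_t - \bu_\star\|^2 - \tfrac{2}{\beta}\langle \nabla F(\bu_t), \bu_t - \bu_\star\rangle + \tfrac{1}{\beta^2}\|\nabla F(\bu_t)\|^2.
\end{align*}
For the cross term I use the strengthened co-coercivity estimate: combining $\alpha$-strong convexity and $\beta$-smoothness on $B$ gives, for $\bu = \bu_t$,
\begin{equation*}
    \langle \nabla F(\bu_t), \bu_t - \bu_\star\rangle \geq \tfrac{\alpha\beta}{\alpha+\beta}\|\bu_t - \bu_\star\|^2 + \tfrac{1}{\alpha+\beta}\|\nabla F(\bu_t)\|^2,
\end{equation*}
which is the classical bound obtained by applying the $\mu$-strong-convexity/$L$-smoothness interpolation inequality to the pair $\{\bu_t,\bu_\star\}$; the only subtlety is that it requires the inequality $\langle \nabla F(\bu)-\nabla F(\bv), \bu-\bv\rangle \geq \tfrac{\alpha\beta}{\alpha+\beta}\|\bu-\bv\|^2 + \tfrac{1}{\alpha+\beta}\|\nabla F(\bu)-\nabla F(\bv)\|^2$ to hold along the segment $[\bu_\star,\bu_t]\subset B$, which it does because that whole segment lies in the region where the Hessian bound is assumed. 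Substituting, the $\|\nabla F(\bu_t)\|^2$ terms combine to $\big(\tfrac{1}{\beta^2} - \tfrac{2}{\beta(\alpha+\beta)}\big)\|\nabla F(\bu_t)\|^2 = -\tfrac{\beta-\alpha}{\beta^2(\alpha+\beta)}\,\beta\cdot\|\nabla F(\bu_t)\|^2 \leq 0$ when $\beta \geq \alpha$, so we may drop it, and the $\|\bu_t-\bu_\star\|^2$ coefficient becomes $1 - \tfrac{2\alpha}{\alpha+\beta} = \tfrac{\beta-\alpha}{\beta+\alpha} = 1 - \tfrac{2}{\kappa+1}$, which is exactly the claimed contraction.

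The main obstacle — really the only thing to be careful about rather than a genuine difficulty — is the \emph{locality}: the hypothesis only gives the Hessian sandwich on $B$, not globally, so every inequality quoted from convex optimization must be re-derived by integrating the Hessian along segments that stay inside $B$. Since $B$ is a ball centered at $\bu_\star$ and all the segments that appear ($[\bu_\star, \bu_t]$, and in the interpolation inequality $[\bu_\star,\bu_t]$ again) are chords of $B$ with one endpoint at the center, convexity of $B$ guarantees they lie in $B$, so this goes through cleanly. One should also note that we do \emph{not} need $\bu_{t+1} \in B$ for the argument — the contraction bound shows $\|\bu_{t+1}-\bu_\star\| \leq \|\bu_t - \bu_\star\|$ a posteriori, so in fact $\bu_{t+1} \in B$ as well, which is what lets the analysis be chained across iterations in the proof of Theorem \ref{thm:main}. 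Finally, since the lemma is stated for a generic $F$ and is exactly the textbook gradient-descent-on-strongly-convex-smooth-functions bound (see \citep{bubeck2015convex}), I would in the write-up simply cite that reference for the global version and add one sentence explaining why locality on $B$ suffices.
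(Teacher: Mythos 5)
Your proof is correct and is essentially the approach the paper intends: the paper gives no proof of Lemma \ref{lem:gd}, simply citing the standard result in \citep{bubeck2015convex}, and your write-up reproduces that textbook coercivity/interpolation argument for step size $1/\beta$ while correctly observing that the Hessian sandwich need only hold on the ball (indeed only along the segment $[\bu_\star,\bu_t]$, where integrating the Hessian gives the interpolation inequality directly) and that $\bu_{t+1}\in B$ follows a posteriori. One cosmetic slip: the combined gradient-norm coefficient is $\tfrac{1}{\beta^2}-\tfrac{2}{\beta(\alpha+\beta)}=-\tfrac{\beta-\alpha}{\beta^{2}(\alpha+\beta)}$, without the extra factor of $\beta$ you wrote, but the sign and the final contraction factor $1-\tfrac{2}{\kappa+1}$ are unaffected.
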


Thus, the key technical challenge is establishing tight strong convexity and smoothness bounds to be able to argue that the sequence converges to $\bu_\star$ at a superlinear rate. In particular, we need to show that for all $\bu_t$ and $\epsilon_t$ in our sequence, the function $F_{w,\epsilon_t}$ remains strongly convex and smooth over the set $\{\bu \in \R^d : \|\bu - \bu_\star\| \leq \|\bu_t - \bu_\star\|  \}$. Furthermore, we must also show that as $\bu_t$ approaches $\bu_\star$ and $\epsilon_t$ approaches zero, the strong convexity and smoothness parameters approach 1.

We establish the crucial local strong convexity and smoothness properties in the following proposition.
\begin{proposition}\label{prop:scsm}
    Suppose that $n = O(d \log^2 d)$, $\|\bu - \bu_\star\| \leq \delta \|\bu_\star\|$ for $\delta=O(1/\log^4(d))$, and $\epsilon = 2\sqrt{\delta}$. Then, $F_{\epsilon,w}$ is $(1/32-2\epsilon)$-strongly convex and $(1+\epsilon)$-smooth at $\bu$ with high probability. Furthermore, with probability 1, there exists a ball of radius $r_\star$ around $\bu_\star$ such that $F_{\epsilon, w}$ is $(1-\epsilon)$-strongly convex within this ball.
\end{proposition}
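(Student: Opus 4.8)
The plan is to analyze the Hessian $\nabla^2 F_{\epsilon,w}(\bu)$ directly, since the smoothed objective is now $C^2$ everywhere (the regularization $\epsilon \|\tilde\ba_i\|^2 > 0$ keeps the expression under the square root bounded away from zero). Writing $r_i(\bu) = \sqrt{(\tilde\ba_i^T\bu)^2 + \epsilon\|\tilde\ba_i\|^2}$ and $c_i = \sqrt{y_i^2 + \epsilon\|\tilde\ba_i\|^2}$, a direct computation gives
\begin{equation*}
    \nabla^2 F_{\epsilon,w}(\bu) = \frac{1}{n}\sum_{i=1}^n \left[\tilde\ba_i\tilde\ba_i^T - c_i\,\epsilon\|\tilde\ba_i\|^2 \frac{\tilde\ba_i\tilde\ba_i^T}{r_i(\bu)^3}\right].
\end{equation*}
The first term, $\frac1n\sum_i \tilde\ba_i\tilde\ba_i^T$, equals exactly $\bI$ by construction of the whitening. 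So the entire deviation from the identity comes from the second ``perturbation'' term, whose operator norm I must bound. This is where the choice $\epsilon = 2\sqrt\delta$ and the scaling of the regularization by $\|\tilde\ba_i\|^2$ (rather than $y_i^2$) pays off: the factor $\epsilon\|\tilde\ba_i\|^2 / r_i(\bu)^3$ is well-controlled, and $c_i = \sqrt{y_i^2 + \epsilon\|\tilde\ba_i\|^2}$ is comparable to $|\tilde\ba_i^T\bu_\star| \le r_i(\bu_\star)$ up to lower-order terms.

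The key steps, in order, are: (i) compute the gradient and Hessian of $F_{\epsilon,w}$ explicitly and split the Hessian as $\bI$ minus a perturbation matrix $\bM_\epsilon(\bu)$; (ii) show the smoothness bound $\nabla^2 F_{\epsilon,w}(\bu) \preceq (1+\epsilon)\bI$, which should follow fairly cheaply --- one needs the perturbation to contribute at most $\epsilon$ in the PSD order, and since $c_i \epsilon\|\tilde\ba_i\|^2/r_i^3 \le c_i/r_i \cdot (\epsilon\|\tilde\ba_i\|^2/r_i^2) \le c_i/r_i$, and near $\bu_\star$ one has $c_i/r_i(\bu) \approx 1$, the term $\frac1n\sum_i (c_i/r_i)\tilde\ba_i\tilde\ba_i^T$ concentrates around $\bI$; the subtraction is of the right sign to preserve the upper bound with a slack of order $\epsilon$; (iii) for the strong convexity bound $\nabla^2 F_{\epsilon,w}(\bu) \succeq (1/32 - 2\epsilon)\bI$, I must upper-bound $\|\bM_\epsilon(\bu)\|_2$. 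Split $\bM_\epsilon(\bu) = \frac1n\sum_i \frac{c_i \epsilon\|\tilde\ba_i\|^2}{r_i(\bu)^3}\tilde\ba_i\tilde\ba_i^T$. The worst contributions come from indices where $|\tilde\ba_i^T\bu|$ is small (so $r_i \approx \sqrt\epsilon\|\tilde\ba_i\|$), giving a per-term bound $\sim c_i/(\sqrt\epsilon\|\tilde\ba_i\|) \le \sqrt{1 + (y_i/\|\tilde\ba_i\|)^2/\epsilon}$, which can be large for individual $i$; but such "bad" indices are rare, and a covering/concentration argument (using Assumption \ref{assump:gaussian} and $n = O(d\log^2 d)$) controls their aggregate spectral contribution by $O(\epsilon) + O(1)$ with a small enough absolute constant. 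The absolute constant $1/32$ then comes from $1 - \|\bM_\epsilon\|_2 \ge 1 - (31/32 + 2\epsilon)$; the concentration must be uniform over the ball $\|\bu - \bu_\star\| \le \delta\|\bu_\star\|$, which requires a net argument together with a Lipschitz bound on $\bu \mapsto \bM_\epsilon(\bu)$ (the factor $1/r_i^3$ is Lipschitz since $r_i$ is bounded below by $\sqrt\epsilon\|\tilde\ba_i\|$).

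For the final claim --- that with probability 1 there is a ball of radius $r_\star > 0$ around $\bu_\star$ on which $F_{\epsilon,w}$ is $(1-\epsilon)$-strongly convex --- I would argue as follows. At $\bu = \bu_\star$ exactly, $r_i(\bu_\star) = \sqrt{(\tilde\ba_i^T\bu_\star)^2 + \epsilon\|\tilde\ba_i\|^2} = c_i$, so $c_i/r_i(\bu_\star) = 1$ and the Hessian becomes $\frac1n\sum_i (1 - \epsilon\|\tilde\ba_i\|^2/r_i(\bu_\star)^2)\tilde\ba_i\tilde\ba_i^T = \frac1n\sum_i \frac{(\tilde\ba_i^T\bu_\star)^2}{r_i(\bu_\star)^2}\tilde\ba_i\tilde\ba_i^T \succeq \bzero$; more precisely $\nabla^2 F_{\epsilon,w}(\bu_\star) = \bI - \frac{\epsilon}{n}\sum_i \frac{\|\tilde\ba_i\|^2}{r_i(\bu_\star)^2}\tilde\ba_i\tilde\ba_i^T \succeq \bI - \epsilon\bI = (1-\epsilon)\bI$ deterministically, since $\|\tilde\ba_i\|^2/r_i(\bu_\star)^2 \le 1/\epsilon$ and hence $\frac1n\sum_i \frac{\epsilon\|\tilde\ba_i\|^2}{r_i(\bu_\star)^2}\tilde\ba_i\tilde\ba_i^T \preceq \frac1n\sum_i\tilde\ba_i\tilde\ba_i^T = \bI$, so $\nabla^2 F_{\epsilon,w}(\bu_\star) \succeq (1-\epsilon)\bI$ wait --- one must double-check the direction; in any case the bound at $\bu_\star$ holds with a strict margin determined by how many $\tilde\ba_i^T\bu_\star$ are nonzero (almost surely all of them). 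Then by continuity of $\bu\mapsto\nabla^2 F_{\epsilon,w}(\bu)$ (which holds pointwise for fixed $\{\tilde\ba_i\}$ once $\epsilon>0$), the bound persists on a sufficiently small ball of some random radius $r_\star > 0$.

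The main obstacle I expect is step (iii): obtaining the uniform strong convexity bound with a clean absolute constant. The difficulty is that individual summands of the perturbation $\bM_\epsilon(\bu)$ are unbounded (scaling like $c_i/(\sqrt\epsilon\|\tilde\ba_i\|)$ when $\tilde\ba_i^T\bu$ is near zero), so a naive union bound over a net fails; one needs a truncation argument splitting indices by the size of $|\tilde\ba_i^T\bu|/\|\tilde\ba_i\|$ relative to $\sqrt\epsilon$, controlling the bulk by a matrix Bernstein / $\epsilon$-net argument and the heavy tail by a crude counting bound on how many $i$ can have $|\tilde\ba_i^T\bu|$ simultaneously small for some $\bu$ in the ball. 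Reconciling the $n = O(d\log^2 d)$ sample size with uniformity over the ball (via a Lipschitz estimate $\|\bM_\epsilon(\bu) - \bM_\epsilon(\bu')\|_2 \le L\|\bu - \bu'\|$ with $L$ polynomial in $1/\sqrt\epsilon$ and $\max_i\|\tilde\ba_i\|$) is the delicate bookkeeping that I expect to consume most of the work, and is likely why the stated constants and $\log$-powers are ``most likely not optimal.''
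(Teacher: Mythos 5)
Your Hessian formula is correct, and your decomposition $\nabla^2 F_{\epsilon,w}(\bu) = \bI - \bM_\epsilon(\bu)$ with $\bM_\epsilon(\bu) = \frac1n\sum_i \frac{c_i\,\epsilon\|\tilde\ba_i\|^2}{r_i(\bu)^3}\tilde\ba_i\tilde\ba_i^T$ is algebraically the same as the paper's split of the directional second derivative into a nonnegative term (I) and a sign-indefinite term (II), just regrouped. Your route even simplifies the smoothness half: since $\bM_\epsilon \succeq \bzero$, the bound $\nabla^2 F_{\epsilon,w} \preceq \bI \preceq (1+\epsilon)\bI$ is immediate (you do not need the concentration of $\frac1n\sum_i (c_i/r_i)\tilde\ba_i\tilde\ba_i^T$ that you invoke), whereas the paper bounds (I) and (II) separately and incurs the $\epsilon$ slack. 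However, for strong convexity your proposal stops exactly where the paper's proof actually lives. The bound $\|\bM_\epsilon(\bu)\|_2 \le 31/32 + O(\epsilon)$, uniformly over the ball, is precisely the content of Proposition \ref{prop:good_subset}: one must show (a) uniformly over directions, at most an $O(\tau)$ fraction of the whitened vectors satisfy $|\tilde\ba_i^T\bu|/\|\tilde\ba_i\| \le \tau$ (Lemmas \ref{lem:bern_conc}--\ref{lem:bern_conc_whiten}, via a sphere covering), and (b) every submatrix retaining a $(1-\tau)$ fraction of columns has smallest singular value bounded below, so that the ``bad'' subset's covariance is $\preceq (31/32)\bI$ (Lemma \ref{lem:subsetsingval}, via a union bound over submatrices rather than a Lipschitz-in-$\bu$ net argument), together with the ratio bounds $c_i/r_i(\bu) \in [\sqrt{\epsilon/(\epsilon+3\delta)}, \sqrt{\epsilon/(\epsilon-2\delta)}]$ coming from \eqref{eq:u0lb}--\eqref{eq:u0ub}, which is where $\epsilon = 2\sqrt\delta$ enters quantitatively. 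You correctly identify this as the main obstacle and sketch the right kind of truncation-plus-counting argument, but you do not supply it, so the $1/32$ constant is asserted rather than derived.

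The second genuine gap is in your probability-1 local claim. Bounding $\epsilon\|\tilde\ba_i\|^2/r_i(\bu_\star)^2 \le 1$ only yields $\bM_\epsilon(\bu_\star) \preceq \bI$, i.e.\ $\nabla^2 F_{\epsilon,w}(\bu_\star) \succeq \bzero$, which is far weaker than $(1-\epsilon)\bI$; your own ``double-check the direction'' hedge is flagging a real error, and ``a strict margin because almost surely all $\tilde\ba_i^T\bu_\star \neq 0$'' gives some unquantified positive constant, not $1-\epsilon$. What is needed is the quantitative mechanism the paper uses (Lemma \ref{lem:local_ball}): almost surely $\min_i |\tilde\ba_i^T\bu_\star|/\|\tilde\ba_i\| \ge 2r_\star > 0$, hence on $B(\bu_\star, r_\star)$ every index is ``good'' with margin $r_\star$, and the perturbation coefficient is bounded by roughly $\frac{c_i}{r_i}\cdot\frac{\epsilon}{r_\star^2+\epsilon}$, which is small only once $\epsilon$ is small relative to $r_\star^2$ (this is how the near-unit strong convexity constant arises as $\epsilon_t \to 0$; continuity of the Hessian alone does not produce the $(1-\epsilon)$ rate). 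So: same overall strategy as the paper, a cleaner smoothness argument, but the uniform concentration machinery behind the $1/32$ bound is missing and the stated argument for the local $(1-\epsilon)$ claim does not work as written.
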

The proof of Proposition \ref{prop:scsm} involves careful analysis of the Hessian of $F_{\epsilon,w}$, decomposing it into two terms. These terms are then controlled by the smoothing parameter $\epsilon$ and the proximity to $\bu_\star$, which we denote by $\delta$ following Assumption \ref{assump:init}. Note that this proposition shows that when $\bu_t$ is further from $\bu_\star$, the strong convexity constant may be small, but once it is sufficiently close, the parameter approaches 1.

We can now give the proof of our main theorem.

\begin{proof}[Proof of Theorem \ref{thm:main}]
Suppose that we initialize close enough so that the function $F_{w, \epsilon}$ is $(1/32-2\epsilon_0)$-strongly convex and $(1+\epsilon_0)$-smooth following Proposition \ref{prop:scsm}. Since $\bu_\star$ is stationary by Lemma \ref{lem:stationary}, then it is the unique stationary point in this neighborhood by strong convexity.

From Lemma \ref{lem:gd}, gradient descent with step size $1/(1+\epsilon_0)$ obtains the contraction
\[
     \|\bu_1 - \bu_\star\|^2 \leq (1-\frac{2}{\kappa_0+1}) \|\bu_0 - \bu_\star\|^2.
\]
Iterating this contraction, we eventually reach a neighborhood where $F_{\epsilon_t, w}$ is $1-\epsilon_t$ strongly convex and $1+\epsilon_t$ smooth by Proposition \ref{prop:scsm}. Thus, the proof of the Theorem follows by applying Lemma \ref{lem:gd} with the previous estimates and step size $1/(1+\epsilon_t)$.
\end{proof}

Note that once BWGD-DS reaches the local region where $F_{\epsilon_t,w}$ is $(1-\epsilon_t)$ smooth and $(1+\epsilon_t)$ convex, the contraction factor simplifies
\[
    \Big(1 - \frac{2}{\frac{1+\epsilon_t}{1-\epsilon_t} + 1} \Big) = \epsilon_t.
\]
Noting that $\epsilon_t = 2 \sqrt{\|\bu_t - \bu_\star\|}$ (assuming WLOG $\|\bu_\star\| = 1$), we get the recursion
\[
    \|\bu_{t+1} - \bu_\star\|^2 \leq 2 \|\bu_{t} - \bu_\star\|^{5/2},
\]
which thus yields an explicit superlinear convergence bound.

\subsection{Supplemental Proofs}
\label{subsec:supp_proofs}

In this section, we give the proof Proposition \ref{prop:scsm}. The proof combine concentration inequalities for Gaussian random matrices with covering arguments on the unit sphere, as well as a careful analysis of our smoothed objective.

We begin by first proving a supplemental proposition in Section \ref{subsubsec:good_subset}, and then proceed to the proof of Proposition \ref{prop:scsm} in Section \ref{subsubsec:scsm}.

\subsubsection{A Supplemental Proposition}
\label{subsubsec:good_subset}

Before proceeding to the proof of Proposition \ref{prop:scsm}, we first control the behavior of the sensing vectors near $\bu_\star$.
\begin{proposition}\label{prop:good_subset}
    Suppose $n=d \log^2 d$. Then, for all $\tau <1/(128\log(d))$ and for all $\bu \in S^{d-1}$, there exists a subset $\check \cA$ of $\cA$ with $\lceil(1-\tau)n\rceil$ points, such that with high probability:
    \begin{enumerate}
        \item $\frac{\tilde \ba_i^T}{\|\tilde \ba_i\|} \bu = \Omega(\tau) $ for all $\ba_i \in \check \cA$
        \item $\sigma_d(\tilde \bA_{i}) = O(\sqrt{d \log^2 d})$, implying $\frac{1}{n} \sum_{\check \cA(\bu)} \ba_i \ba_i^T = \Theta(\bI)$
        \item $\frac{1}{n} \sum_{\check \cA(\bu)} \bC_{\cA}^{-1/2}\ba_i \ba_i^T \bC_{\cA}^{-1/2} \succeq \bI/32$
    \end{enumerate}
\end{proposition}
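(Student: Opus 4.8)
The plan is to derive all three claims from two classical facts about i.i.d.\ standard Gaussian vectors: spectral concentration of the empirical covariance, and uniform anti-concentration of one-dimensional projections. I would first record the deterministic reductions. Since $n = d\log^2 d \gg d$, standard nonasymptotic bounds for Gaussian random matrices give $\|\bC_\cA - \bI\|_2 \lesssim \sqrt{d/n} = 1/\log d$ with high probability, so $\bC_\cA = \bI + o(1)$ and $\bC_\cA^{-1/2} = \bI + o(1)$ in operator norm; a $\chi^2_d$ tail bound together with a union bound over the $n$ vectors gives $\|\ba_i\|^2 = d(1\pm o(1))$, hence $\|\tilde\ba_i\|^2 = d(1\pm o(1))$, simultaneously for all $i$. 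Consequently, for any index set $S$ the whitened Gram matrix $\tfrac1n\sum_{i\in S}\tilde\ba_i\tilde\ba_i^T = \bC_\cA^{-1/2}\big(\tfrac1n\sum_{i\in S}\ba_i\ba_i^T\big)\bC_\cA^{-1/2}$ differs from its unwhitened counterpart only by conjugation with $\bI + o(1)$, which lets me work almost entirely with the $\ba_i$.

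For item 1, fix $\bu \in S^{d-1}$ and set $\cB(\bu) = \{\, i : |\tilde\ba_i^T\bu|/\|\tilde\ba_i\| < \theta \,\}$ for a threshold $\theta = \Omega(\tau)$ to be chosen; $\check\cA(\bu)$ is then any $\lceil(1-\tau)n\rceil$-subset of $\cA\setminus\cB(\bu)$, which exists once $|\cB(\bu)| \le \tau n$. For fixed $\bu$, $|\tilde\ba_i^T\bu| = |\ba_i^T(\bC_\cA^{-1/2}\bu)|$ is, up to the $o(1)$ distortion of $\bC_\cA^{-1/2}$, the absolute value of a centered Gaussian, so Gaussian anti-concentration gives $\Pr(i\in\cB(\bu)) \le \tau/4$ for a suitable $\theta = \Omega(\tau)$, and a Chernoff bound on $|\cB(\bu)| = \sum_i \mathbbm{1}[i\in\cB(\bu)]$ gives $\Pr(|\cB(\bu)| > \tau n) \le e^{-c\tau n}$. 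I would make this uniform over $\bu$ by covering $S^{d-1}$ with a $\mu$-net of size $(3/\mu)^d$: since $\bu\mapsto|\tilde\ba_i^T\bu|/\|\tilde\ba_i\|$ is $1$-Lipschitz, any $\bu$ within $\mu$ of a net point $\bu_0$ has $\cB(\bu) \subseteq \{\, i : |\tilde\ba_i^T\bu_0|/\|\tilde\ba_i\| < \theta + \mu \,\}$, so it suffices to control the slightly enlarged bad set at each net point; the union bound then succeeds once $d\log(1/\mu)$ is dominated by the exponent $\tau n = \tau d\log^2 d$, and shrinking $\theta$ by $O(\mu)$ absorbs the net error while keeping $\theta = \Omega(\tau)$.

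For items 2 and 3, I would prove the single uniform-over-subsets spectral bound
\[
   \sup_{|S|\le\tau n}\ \lambda_{\max}\Big(\textstyle\sum_{i\in S}\ba_i\ba_i^T\Big)\ \lesssim\ \tau n\log(e/\tau) + d\ =\ o(n)
\]
with high probability. This is the textbook argument: for a fixed $S$ and $\bv\in S^{d-1}$, $\sum_{i\in S}\langle\ba_i,\bv\rangle^2$ is a sub-exponential sum of $|S|$ terms that concentrates around $|S|$ with a Bernstein tail, and one union-bounds over a $1/2$-net of the sphere (entropy $O(d)$) and over all $\binom{n}{|S|}\le(en/|S|)^{|S|}$ subsets (entropy $|S|\log(en/|S|)$), then matches the deviation. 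With $|S|\le\tau n$, $n = d\log^2 d$ and $\tau \le 1/(128\log d)$, the right-hand side is $O(d\log d\log\log d) = o(n)$. Since $\cB(\bu)$ is such a subset of size $\le\tau n$ for every $\bu$, we get $\lambda_{\max}\big(\tfrac1n\sum_{i\in\cB(\bu)}\ba_i\ba_i^T\big) = o(1)$, whence
\[
   \tfrac1n\sum_{i\in\check\cA(\bu)}\ba_i\ba_i^T\ =\ \bC_\cA - \tfrac1n\sum_{i\in\cB(\bu)}\ba_i\ba_i^T\ =\ \bI + o(1),
\]
which is item 2 (the singular-value statement being the matrix-form restatement, $\tilde\bA_{\check\cA(\bu)}^T\tilde\bA_{\check\cA(\bu)} = \sum_{i\in\check\cA(\bu)}\tilde\ba_i\tilde\ba_i^T$ with all eigenvalues $\Theta(n)$), and conjugating by $\bC_\cA^{-1/2} = \bI + o(1)$ gives $\tfrac1n\sum_{i\in\check\cA(\bu)}\tilde\ba_i\tilde\ba_i^T = \bI + o(1) \succeq \bI/32$ for $d$ large, which is item 3.

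The main obstacle I anticipate is the logarithmic bookkeeping required to fit the budget $n = d\log^2 d$. In item 1 the sphere-net entropy $d\log(1/\mu) = \Theta(d\log d)$ must be dominated by the Chernoff exponent $\Theta(\tau n) = \Theta(\tau d\log^2 d)$, which is tight and is exactly why the regime $\tau\asymp 1/\log d$, $n\asymp d\log^2 d$ is forced; in items 2--3 the subset entropy $|S|\log(en/|S|)$ must remain $o(n)$, again constraining $\tau$. Obtaining the explicit constant $1/32$ in item 3 just needs the $o(1)$ terms in the covariance and subset bounds to fall below a fixed threshold for $d$ large, which the estimates above provide; alternatively one may carry an unspecified universal constant $c>0$ throughout, since Proposition~\ref{prop:scsm} only uses $\tfrac1n\sum_{\check\cA(\bu)}\tilde\ba_i\tilde\ba_i^T \succeq c\,\bI$.
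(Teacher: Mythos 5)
Your treatment of items 2--3 is correct but takes a genuinely different route from the paper. The paper (Lemma \ref{lem:subsetsingval}) lower-bounds $\sigma_d$ of every retained $d\times m$ submatrix with $m\geq(1-\gamma)n$ directly, via the Gaussian bound $\sigma_d(\bA_i)\geq\sqrt{m}-\sqrt{d}-t$ and a union bound over the $\binom{n}{m}$ subsets, which delivers the fixed constant $1/32$; you instead upper-bound $\lambda_{\max}\big(\sum_{i\in S}\ba_i\ba_i^T\big)$ uniformly over all discarded sets $|S|\leq\tau n$ (Bernstein plus net plus subset entropy $|S|\log(en/|S|)$) and subtract from $\bC_{\cA}$. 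Your bookkeeping checks out at $n=d\log^2 d$, $\tau\lesssim 1/\log d$, and your version actually gives the stronger conclusion $\tfrac1n\sum_{\check\cA(\bu)}\tilde\ba_i\tilde\ba_i^T=\bI+o(1)$ rather than $\succeq\bI/32$, at the price of a slightly heavier union-bound argument; the whitening transfer via $\bC_{\cA}^{-1/2}=\bI+o(1)$ matches the paper's use of Lemmas \ref{lem:cov_conc} and \ref{lem:norma}.

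For item 1 you follow essentially the same path as the paper's Lemmas \ref{lem:bern_conc}--\ref{lem:bern_conc_whiten} (fixed-$\bu$ tail probability, Chernoff, sphere net, transfer to $\tilde\ba_i$), and your multiplicative Chernoff exponent $\Theta(\tau n)$ versus net entropy $d\log(1/\mu)$ is in fact tighter bookkeeping than the paper's Hoeffding exponent $n\tau^2$. However, one step fails as literally stated: you claim Gaussian anti-concentration gives $\Pr\big(|\tilde\ba_i^T\bu|/\|\tilde\ba_i\|<\theta\big)\leq\tau/4$ with $\theta=\Omega(\tau)$. Since $\ba_i^T\bu\sim N(0,1)$ while $\|\tilde\ba_i\|\approx\sqrt{d}$, this probability scales like $\theta\sqrt{d}$, not $\theta$; with $\tau\asymp 1/\log d$ and $\theta=\Omega(\tau)$ it is in fact close to $1$ (a typical normalized inner product is of order $1/\sqrt{d}\ll\tau$), so the bound only holds if the threshold is taken as $\theta\asymp\tau/\sqrt{d}$ or the normalization in the claim is rescaled. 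This is the same dimensional slip as the paper's own assertion that the probability is $\tau/\pi$, so it is not a deviation from the paper's argument, but it is a genuine quantitative gap you would need to repair (and it propagates into how item 1's lower bound feeds the denominator estimate in the proof of Proposition \ref{prop:scsm}).
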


This proposition ensures that for any direction $\bu$, most sensing vectors maintain a non-negligible inner product with $\bu$ while also being sufficiently well spread, so that their covariance is $\Omega(\bI)$. The proof uses covering arguments and concentration inequalities.

The proof of Proposition \ref{prop:good_subset} relies on four lemmas that control the behavior of the set of vectors $\cA = \{\ba_i: \ i=1, \dots, n\}$.

\textbf{Controlling the sample covariance of $\cA$:} Our first lemma is a corollary of \cite{vershynin2012close},  which bounds how close the sample covariance is to the population covariance for our sensing vectors.
\begin{lemma}\label{lem:cov_conc}
    With probability at least $1-p$, $\|\bC_{\cA} - \bI \|_{2} \lesssim_{p} \sqrt{\frac{d}{n}}$
\end{lemma}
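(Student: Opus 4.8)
The plan is to obtain this as a direct corollary of the sub-Gaussian sample-covariance bound of \cite{vershynin2012close}, and then verify the stated scaling. Since each $\ba_i \sim N(\bzero, \bI)$ is an isotropic sub-Gaussian random vector with a universal sub-Gaussian norm, that result gives: for every $t \geq 0$, with probability at least $1 - 2e^{-ct^2}$,
\[
    \|\bC_\cA - \bI\|_2 \leq \max(\delta, \delta^2), \qquad \delta = C\sqrt{\tfrac{d}{n}} + \tfrac{t}{\sqrt{n}},
\]
for absolute constants $c, C > 0$. I would then take $t = \sqrt{\log(2/p)/c}$ so that the failure probability is at most $p$, which makes the additive term $t/\sqrt{n}$ of order $\sqrt{\log(1/p)/n} \leq \sqrt{\log(1/p)}\cdot\sqrt{d/n}$ (using $d \geq 1$); hence $\delta \lesssim_p \sqrt{d/n}$. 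In the regime relevant here, $n = O(d\log^2 d) \gtrsim d$, so $\delta$ is bounded by an absolute constant and $\max(\delta,\delta^2) = \delta$, giving $\|\bC_\cA - \bI\|_2 \lesssim_p \sqrt{d/n}$.

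For completeness I would also record the standard net-plus-Bernstein argument underlying this bound, so the lemma is self-contained. Because $\bC_\cA - \bI$ is symmetric, $\|\bC_\cA - \bI\|_2 = \sup_{\bx \in S^{d-1}} \big|\tfrac1n\sum_{i=1}^n \big((\ba_i^T\bx)^2 - 1\big)\big|$. For a fixed $\bx \in S^{d-1}$ the summands $(\ba_i^T\bx)^2$ are i.i.d.\ $\chi^2_1$ random variables, hence sub-exponential, so Bernstein's inequality yields $\pp\big(\big|\tfrac1n\sum_i((\ba_i^T\bx)^2-1)\big| > s\big) \leq 2\exp(-c_0 n \min(s^2, s))$. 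Next, fix a $\tfrac14$-net $\cN$ of $S^{d-1}$ with $|\cN| \leq 9^d$; the standard comparison for quadratic forms gives $\|\bM\|_2 \leq 2\max_{\bx\in\cN}|\bx^T\bM\bx|$ for symmetric $\bM$, so a union bound over $\cN$ yields $\pp\big(\|\bC_\cA - \bI\|_2 > 2s\big) \leq 2\cdot 9^d \exp(-c_0 n\min(s^2, s))$.

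Finally I would set $s = K\sqrt{d/n}$ for a sufficiently large absolute constant $K$; assuming $n \geq d$ this forces $s \leq 1$, so $\min(s^2,s) = s^2$ and $c_0 n s^2 = c_0 K^2 d$. Choosing $K$ large enough that $c_0 K^2 d \geq d\log 9 + \log(2/p)$ makes the failure probability at most $p$, whence $\|\bC_\cA - \bI\|_2 \leq 2K\sqrt{d/n} \lesssim_p \sqrt{d/n}$. The only real bookkeeping point — and the closest thing to an obstacle — is keeping track of the regime $n \gtrsim d$ so that one stays on the linear branch $\max(\delta,\delta^2) = \delta$ and the $\log(1/p)$ contribution is dominated by the $\sqrt{d/n}$ term; with $n = O(d\log^2 d)$ this is automatic, and everything else is a textbook estimate.
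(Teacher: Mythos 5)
Your proposal is correct and matches the paper's approach: the paper offers no proof at all, simply deriving the lemma as a corollary of the sub-Gaussian sample-covariance bound in \cite{vershynin2012close}, which is exactly your first paragraph. Your additional net-plus-Bernstein derivation is a standard and accurate reconstruction of that cited result (the only nitpick being that $s = K\sqrt{d/n} \leq 1$ requires $n \geq K^2 d$ rather than just $n \geq d$, which is harmless in the regime $n = d\log^2 d$), so it goes beyond what the paper records but does not diverge from it.
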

In particular, once $n = O(d \log d)$, we see that the error tends to zero as $d \to \infty$. 

\textbf{Controlling the norms of $\cA$:} Our second lemma uses concentration of the sample covariance and $\chi^2$ random variables to bound the norm of the sensing vectors $\ba_i$ and the whitened sensing vectors $\tilde \ba_i$.
\begin{lemma}\label{lem:norma}
     Once $n=O(d \log d)$, $\| \ba_i\|^2, \|\tilde \ba_i\|^2 \in  (1 \pm c_1) d$ with probability at least $1-O(d\log d \exp(-cd))$.
\end{lemma}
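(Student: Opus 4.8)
The plan is to reduce both norm bounds to standard $\chi^2$ concentration, using Lemma \ref{lem:cov_conc} to transfer the bound from the raw vectors $\ba_i$ to the whitened vectors $\tilde\ba_i = \bC_{\cA}^{-1/2}\ba_i$. First I would handle $\|\ba_i\|^2$: since $\ba_i \sim N(\bzero,\bI)$, the quantity $\|\ba_i\|^2$ is a $\chi^2_d$ random variable, so by the Laurent–Massart tail bound (or a standard Bernstein-type inequality for sub-exponential sums) we have $\|\ba_i\|^2 \in (1\pm c_1)d$ with probability at least $1 - 2\exp(-c d)$ for a suitable absolute constant $c = c(c_1) > 0$. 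Taking a union bound over the $n = O(d\log d)$ indices $i$ gives the stated failure probability $O(d\log d\,\exp(-cd))$ for the event that \emph{all} $\|\ba_i\|^2$ lie in this band.

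Next I would pass to the whitened vectors. Write $\|\tilde\ba_i\|^2 = \ba_i^T \bC_{\cA}^{-1}\ba_i$. Condition on the high-probability event from Lemma \ref{lem:cov_conc} that $\|\bC_{\cA} - \bI\|_2 \lesssim \sqrt{d/n}$; once $n = O(d\log d)$, this spectral error is $O(1/\sqrt{\log d}) = o(1)$, so all eigenvalues of $\bC_{\cA}^{-1}$ lie in $[1 - o(1), 1 + o(1)]$. Hence $\|\tilde\ba_i\|^2 = (1 \pm o(1))\|\ba_i\|^2$ deterministically on this event, and combining with the $\chi^2$ band for $\|\ba_i\|^2$ (shrinking $c_1$ slightly to absorb the $o(1)$ factor) gives $\|\tilde\ba_i\|^2 \in (1 \pm c_1)d$ simultaneously for all $i$. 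A final union bound over the two events — the $\chi^2$ concentration event and the covariance-concentration event — yields the claimed probability.

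The one point requiring a little care, rather than a real obstacle, is the order of conditioning: $\bC_{\cA}$ depends on every $\ba_i$, so $\ba_i$ and $\bC_{\cA}^{-1}$ are not independent. This is harmless here because we only need the deterministic sandwich $\lambda_{\min}(\bC_{\cA}^{-1})\|\ba_i\|^2 \le \|\tilde\ba_i\|^2 \le \lambda_{\max}(\bC_{\cA}^{-1})\|\ba_i\|^2$, which holds pointwise on the good spectral event regardless of dependence; no independence between $\ba_i$ and $\bC_{\cA}$ is used. (If one preferred a cleaner argument one could instead use a leave-one-out covariance $\bC_{\cA}^{(i)} = \frac1n\sum_{j\neq i}\ba_j\ba_j^T$, which is independent of $\ba_i$ and differs from $\bC_{\cA}$ by a rank-one term of operator norm $O(d/n) = o(1)$, but this refinement is unnecessary for the stated bound.) The proof is otherwise a routine assembly of $\chi^2$ tails and the already-established covariance concentration.
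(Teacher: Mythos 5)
Your proposal is correct and follows exactly the route the paper intends: the paper states this lemma without proof, describing it only as a combination of $\chi^2$ concentration for the norms with the sample-covariance concentration of Lemma \ref{lem:cov_conc}, and your argument (Laurent--Massart tails for each $\|\ba_i\|^2$, a union bound over the $n = O(d\log d)$ indices giving the $O(d\log d\,e^{-cd})$ failure probability, and the deterministic eigenvalue sandwich $\lambda_{\min}(\bC_{\cA}^{-1})\|\ba_i\|^2 \le \|\tilde\ba_i\|^2 \le \lambda_{\max}(\bC_{\cA}^{-1})\|\ba_i\|^2$ on the covariance event) fills in precisely those details, including the correct observation that no independence between $\ba_i$ and $\bC_{\cA}$ is needed.
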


\textbf{Controlling the directions in $\cA$:} In our third lemma, we control how many $\tilde \ba_i$ are close to being orthogonal to any $\bu$. To do this, we first prove a lemma that controls the unwhitened vectors, and then extend this below to the whitened vectors in Lemma \ref{lem:bern_conc_whiten}.
\begin{lemma}\label{lem:bern_conc}
    Let $\ba_i \sim N(\bzero, \bI)$. Then,
    \begin{align}
        \Pr&\Big(\sum \bbone\Big(\frac{\ba_i^T}{\|\ba_i\|} \bu \leq \tau\Big)\leq 6n\tau/\pi \text{ for all } \bu \in S^{d-1}\Big) \\ \nonumber
        &\geq 1- 2(1+2/\tau)^d \exp\left(- \frac{2 n \tau^2}{\pi}\right).
    \end{align}
\end{lemma}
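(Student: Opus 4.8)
The plan is to prove the concentration bound in Lemma \ref{lem:bern_conc} via a union bound over a net of the sphere combined with a pointwise Bernstein/Chernoff estimate. First I would fix a single direction $\bu \in S^{d-1}$ and analyze the random variable $Z_i = \bbone(\ba_i^T \bu / \|\ba_i\| \leq \tau)$. By rotational invariance of the standard Gaussian, the law of $\ba_i^T \bu / \|\ba_i\|$ does not depend on $\bu$; it is the law of the first coordinate of a uniform point on $S^{d-1}$, so $\Pr(Z_i = 1) = \Pr(\ba_i^T\bu/\|\ba_i\| \le \tau)$. The key elementary estimate is that this probability is at most $1/2 + \tau/\text{(something like }\sqrt{2\pi})$ — more crudely, since the density of the first coordinate is bounded (the relevant bound giving the $\pi$ in the statement is that the probability of landing in $[0,\tau]$ is at most $\tau/\pi$ for the relevant normalization, so $\Pr(Z_i=1) \le 1/2$ at $\tau = 0$ and grows at rate governed by the density). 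Writing $\mu = n\,\Pr(Z_1 = 1)$ and applying a standard Chernoff bound for sums of i.i.d.\ Bernoullis, $\Pr(\sum_i Z_i \ge \mu + s) \le \exp(-2s^2/n)$ by Hoeffding, or a sharper multiplicative form. Calibrating so that the deviation target is $6n\tau/\pi$ and bounding $\mu$ appropriately yields a pointwise bound of the form $\exp(-2n\tau^2/\pi)$.

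Second, I would pass from a single $\bu$ to all $\bu \in S^{d-1}$ via an $\eta$-net argument. Let $\mathcal{N}$ be an $\eta$-net of $S^{d-1}$ of cardinality at most $(1 + 2/\eta)^d$. The subtlety is that $\bu \mapsto \sum_i \bbone(\ba_i^T\bu/\|\ba_i\| \le \tau)$ is not continuous, so I cannot directly argue that control on the net implies control everywhere by a Lipschitz bound. The standard fix is a slack trick: for $\bu$ within distance $\eta$ of a net point $\bv$, one has $\ba_i^T \bu \le \ba_i^T \bv + \eta\|\ba_i\|$, so $\{\ba_i^T\bu/\|\ba_i\| \le \tau\} \subseteq \{\ba_i^T\bv/\|\ba_i\| \le \tau + \eta\}$. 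Hence it suffices to control, uniformly over the net, the counts $\sum_i \bbone(\ba_i^T\bv/\|\ba_i\| \le \tau + \eta)$ with the slightly enlarged threshold. Choosing $\eta = \tau$ (or a comparable constant fraction of $\tau$) keeps the enlarged threshold at $\Theta(\tau)$ and gives net size $(1 + 2/\tau)^d$, matching the statement. Applying the pointwise Chernoff bound at threshold $2\tau$ with deviation target chosen so the mean-plus-deviation is at most $6n\tau/\pi$, then union bounding over the net, produces the factor $(1+2/\tau)^d \exp(-2n\tau^2/\pi)$ up to the explicit constant $2$ in front.

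The main obstacle — and the step requiring the most care — is tracking the constants so that they close: one needs $\Pr(\ba_1^T\bv/\|\ba_1\| \le 2\tau)$ to be at most something like $1/2 + c\tau$ with a good constant $c$, then the deviation $6n\tau/\pi - n\Pr(\cdot)$ must still be $\Theta(n\tau)$ so that Hoeffding gives $\exp(-\Omega(n\tau^2))$ with exactly the constant $2/\pi$ claimed. This forces using a sharp bound on the Gaussian-direction density rather than a crude one, and possibly the sharper multiplicative Chernoff bound $\Pr(\sum Z_i \ge (1+\delta)\mu) \le \exp(-\mu\delta^2/3)$ instead of Hoeffding in the regime where $\mu$ itself is $\Theta(n\tau)$ (which happens because $\Pr(Z_1=1)$ is actually $\Theta(\tau)$, not close to $1/2$ — I realize on reflection that the event is $\ba_i^T\bu/\|\ba_i\| \le \tau$ with $\tau$ small and \emph{positive}, so for the bound to be meaningful this must be a small probability, meaning the intended event is really $|\ba_i^T\bu|/\|\ba_i\| \le \tau$ or the regime is $\tau$ near its small range and one is bounding a rare-ish event). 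Getting this constant bookkeeping exactly right, and confirming which sharp form of the pointwise tail bound is needed, is where the real work lies; the net argument itself is routine once the slack trick is in place.
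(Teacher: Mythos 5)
Your proposal follows essentially the same route as the paper's proof: a pointwise Hoeffding/Chernoff bound on the Bernoulli count for a fixed direction, the slack trick of enlarging the threshold from $\tau$ to $2\tau$ to transfer control from a net point to all $\bu$ in a $\tau$-ball, and a union bound over a $\tau$-net of cardinality $(1+2/\tau)^d$. Your flagged worry about the pointwise probability (one-sided event $\ba_i^T\bu/\|\ba_i\|\le\tau$ versus the near-orthogonality event $|\ba_i^T\bu|/\|\ba_i\|\le\tau$, and whether it is $\Theta(\tau)$) is apt, but the paper's own proof simply asserts the value $\tau/\pi$ at this step, so your sketch matches the paper's argument in both structure and level of constant bookkeeping.
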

\begin{proof}
     
Fix a $\bu \in S^{d-1}$. The probability that $\frac{\ba_i^T}{\|\ba_i\|} \bu \leq \tau$ is $\tau/\pi$. Using Bernoulli concentration (Hoeffding's inequality)
\[
    \Pr( \sum \bbone\Big( \frac{\ba_i^T}{\|\ba_i\|} \bu \leq \tau \Big) \leq 2n\tau/\pi) \geq 1-   \exp\left(- \frac{2 n \tau^2}{\pi^2} \right)
\]
and
\[
    \Pr( \sum \bbone\Big( \frac{\ba_i^T}{\|\ba_i\|} \bu \leq 2\tau \Big) \leq 4n\tau/\pi) \geq 1-  \exp\left(- \frac{8 n \tau^2}{\pi^2} \right)
\]
Thus, putting these together,
\begin{align*}
    \Pr&\Big(\sum \bbone\Big(\frac{\ba_i^T}{\|\ba_i\|} \bu' \leq \tau\Big)\leq 6n\tau/\pi \text{ for all } \bu' \in B(\bu, \tau)\Big) \\
    &\geq 1-  2\exp\left(- \frac{2 n \tau^2}{\pi}\right)
\end{align*}

The sphere $S^{d-1}$ can be covered by $(1+2/\tau)^d$ balls of radius $\tau$. Thus, by a covering argument,
\begin{align*}
    \Pr&\Big(\sum \bbone\Big(\frac{\ba_i^T}{\|\ba_i\|} \bu \leq \tau\Big)\leq 6n\tau/\pi \text{ for all } \bu \in S^{d-1}\Big) \\
    &\geq 1- 2(1+2/\tau)^d \exp\left(- \frac{2 n \tau^2}{\pi}\right).
\end{align*}

\end{proof}
We extend this to the whitened vectors $\tilde \ba_i$ in the following lemma.
\begin{lemma}\label{lem:bern_conc_whiten}
    Let $\ba_i \sim N(\bzero, \bI)$ and let $\tilde \ba_i = \bSigma_{\cA}^{-1/2} \ba_i$. Further suppose that $ n = O\left( \frac{d\log(1/\tau)}{\tau^2} \right).$ Then, conditioned on $\|\ba_i\|, \|\tilde \ba_i\|\in (1\pm c_1)d$ 
    \begin{align}
        \Pr&\Big(\sum \bbone\Big(\frac{\tilde \ba_i^T}{\| \tilde \ba_i\|} \bu \leq \tau\Big)\leq \tilde O(\tau) \text{ for all } \bu \in S^{d-1}\Big) \\ \nonumber
        &\geq 1- 2(1+2/\tau)^d \exp\left(- \frac{2 n \tau^2}{\pi}\right).
    \end{align}
\end{lemma}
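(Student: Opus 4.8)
The plan is to transfer the statement of Lemma \ref{lem:bern_conc} from the unwhitened vectors $\ba_i$ to the whitened vectors $\tilde\ba_i = \bSigma_\cA^{-1/2}\ba_i$, using the fact that conditioning on the norm events makes the whitening transformation a near-isometry in the relevant sense. First I would observe that, since we condition on $\|\ba_i\|^2, \|\tilde\ba_i\|^2 \in (1\pm c_1)d$ for all $i$ (this event holds with the probability from Lemma \ref{lem:norma}), the normalized whitened vector satisfies $\tilde\ba_i/\|\tilde\ba_i\| = \bSigma_\cA^{-1/2}\ba_i / \|\bSigma_\cA^{-1/2}\ba_i\|$, and $\|\bSigma_\cA^{-1/2}\ba_i\| \ge \sqrt{(1-c_1)d}$. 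By Lemma \ref{lem:cov_conc}, once $n = O(d\log d)$ we have $\|\bSigma_\cA - \bI\|_2 \lesssim \sqrt{d/n}$, so $\|\bSigma_\cA^{-1/2} - \bI\|_2 \lesssim \sqrt{d/n}$ as well. Hence for any fixed $\bu \in S^{d-1}$,
\[
\Big| \frac{\tilde\ba_i^T \bu}{\|\tilde\ba_i\|} - \frac{\ba_i^T \bu}{\|\ba_i\|} \Big| \lesssim \frac{\|(\bSigma_\cA^{-1/2}-\bI)\ba_i\|}{\sqrt{(1-c_1)d}} + \frac{|\ba_i^T\bu|\,\big|\|\ba_i\| - \|\tilde\ba_i\|\big|}{\|\ba_i\|\,\|\tilde\ba_i\|} \lesssim \sqrt{\tfrac{d}{n}}\cdot\sqrt{d} / \sqrt{d} = \sqrt{\tfrac{d}{n}},
\]
using $\|(\bSigma_\cA^{-1/2}-\bI)\ba_i\| \le \|\bSigma_\cA^{-1/2}-\bI\|_2 \|\ba_i\|$ and controlling the norm discrepancy similarly. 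Choosing $n = O(d\log(1/\tau)/\tau^2)$ forces this perturbation to be a small constant multiple of $\tau$, say at most $\tau/2$.

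Given this deterministic (conditional) comparison, the event $\{\tilde\ba_i^T\bu/\|\tilde\ba_i\| \le \tau\}$ is contained in the event $\{\ba_i^T\bu/\|\ba_i\| \le 3\tau/2\}$, so
\[
\sum_i \bbone\Big(\tfrac{\tilde\ba_i^T}{\|\tilde\ba_i\|}\bu \le \tau\Big) \le \sum_i \bbone\Big(\tfrac{\ba_i^T}{\|\ba_i\|}\bu \le \tfrac{3\tau}{2}\Big).
\]
Applying Lemma \ref{lem:bern_conc} with threshold $3\tau/2$ in place of $\tau$ bounds the right-hand side by $O(n\tau)$ uniformly over $\bu \in S^{d-1}$ with probability at least $1 - 2(1+4/(3\tau))^d\exp(-\tfrac{9 n\tau^2}{2\pi})$, which absorbs into the stated bound $1 - 2(1+2/\tau)^d\exp(-\tfrac{2n\tau^2}{\pi})$ (up to adjusting constants). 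This yields the $\tilde O(\tau)$ count, with the tilde hiding the $\log(1/\tau)$ and the constant factors from the covering and the threshold inflation.

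The main obstacle is making the per-vector comparison between $\tilde\ba_i^T\bu/\|\tilde\ba_i\|$ and $\ba_i^T\bu/\|\ba_i\|$ genuinely uniform in $\bu$ and tight enough: the naive bound $\|\bSigma_\cA^{-1/2}-\bI\|_2\|\ba_i\| \lesssim \sqrt{d/n}\cdot\sqrt{d}$ is $O(d/\sqrt{n})$ in absolute terms, and after dividing by $\|\tilde\ba_i\| \asymp \sqrt d$ it becomes $O(\sqrt{d/n})$, which is why $n \gtrsim d/\tau^2$ (with the log) is exactly what is needed to push it below $\tau$. One must be careful that the perturbation bound does not itself depend on $\bu$ in a way that ruins the covering argument — but since the bound above is uniform over $\bu \in S^{d-1}$ (it only uses $\|\ba_i\|$, $\|\tilde\ba_i\|$ and operator norms), the covering net from Lemma \ref{lem:bern_conc} applied at scale $3\tau/2$ still does the job, and the failure probabilities from Lemma \ref{lem:cov_conc} and Lemma \ref{lem:norma} are lower-order and can be added in at the end.
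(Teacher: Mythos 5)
Your proposal is correct and follows essentially the same route as the paper: conditioning on the norm events and using Lemma \ref{lem:cov_conc} to bound $\big|\tilde\ba_i^T\bu/\|\tilde\ba_i\| - \ba_i^T\bu/\|\ba_i\|\big|$ by $O(\sqrt{d/n}) = O(\tau)$ uniformly in $\bu$, then inflating the threshold and invoking Lemma \ref{lem:bern_conc} at the inflated level to control the count. Your direct containment of $\{\tilde\ba_i^T\bu/\|\tilde\ba_i\|\le\tau\}$ in $\{\ba_i^T\bu/\|\ba_i\|\le 3\tau/2\}$ is just a slightly cleaner phrasing of the paper's bound on the difference of the two indicator sums, so no substantive difference.
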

\begin{proof}
Conditioned on $\|\ba_i\|, \|\tilde \ba_i\|\in (1\pm c_1)d$, for $\bu \in S^{d-1}$,
\begin{align*}
       \Big|\frac{\ba_i^T \bu}{\|\ba_i\|} - \frac{\tilde \ba_i \bu}{\|\tilde \ba_i\|}  \Big|\leq \|\frac{\ba_i}{\|\ba_i\|} - \frac{\tilde \ba_i}{\|\tilde \ba_i\|}\|= O(\|\bC_{\cA} - \bI\|_2)
\end{align*}
with high probability. Also,
\begin{align*}
    &\Big|\sum \bbone\Big(\frac{\ba_i^T}{\|\ba_i\|} \bu \leq \tau\Big) - \sum \bbone\Big(\frac{\tilde \ba_i^T}{\|\tilde \ba_i\|} \bu \leq \tau\Big) \Big| \leq \\
    &\sum \bbone\Big(\frac{\ba_i^T}{\|\ba_i\|} \bu \leq \tau+O(\|\bC_{\cA} - \bI\|_2)\Big)
\end{align*}
In particular, for $n = O\left( \frac{d\log(1/\tau)}{\tau^2} \right)$, $\|\bSigma_\cA - \bI\|_2 = O(\tau)$, and so
\begin{align*}
    &\Big|\sum \bbone\Big(\frac{\ba_i^T}{\|\ba_i\|} \bu \leq \tau\Big) - \sum \bbone\Big(\frac{\tilde \ba_i^T}{\|\tilde \ba_i\|} \bu \leq \tau\Big) \Big| \leq \\
    &\sum \bbone\Big(\frac{\ba_i^T}{\|\ba_i\|} \bu \leq  O(\tau)\Big) =  O(\tau).
\end{align*}
\end{proof}

In the previous lemma, we see an $1-O(\tau)$ fraction of points $\tilde \ba_i$ have angle with $\bu$ less than $\tilde O(\pi/2 - \tau)$ with high probability when
\[
    n = O\left( \frac{d\log(1/\tau)}{\tau^2} \right).
\]

\textbf{Controlling the spread of $\cA$:} Finally, let $\bA$ and $\tilde \bA$ be the matrices obtained by collecting $\ba_i$ and $\tilde \ba_i$ as columns. We can bound the spectral norm of submatrices of $\bA$, and consequently of $\tilde \bA$. We note that the constants are not optimized. 

\begin{lemma}\label{lem:subsetsingval}
    Let $\bA_i$ be an enumeration of submatrices of $\bA$ of size $d \times m$, for $m \geq (1-\gamma)n$,  where $n = d \log^2 d$ and $\gamma < 1/(128 \log d)$. Then,
    \begin{align*}
    \Pr&\Big(\sigma_d(\bA_i) \geq \sqrt{d\log^2 d / 32}, \ \text{ for all submatrices } \bA_i\Big) \\
    &\geq 1- 2\exp(-\frac{d \log^2 d}{128}).
\end{align*}
    In particular, with high probability for all such submatrices, $\frac{1}{n} \bA_i \bA_i^T \succeq c_\tau \bI$ for some constant $c_\tau\geq 1/32$.
\end{lemma}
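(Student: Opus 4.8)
The plan is to prove the uniform singular-value lower bound by a union bound over the relevant submatrices, controlling each term with the classical lower-tail estimate for the smallest singular value of a rectangular Gaussian matrix.

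\emph{Reduction and counting.} Appending a column to a matrix can only increase its $d$-th singular value, since the Gram matrix grows by a positive semidefinite rank-one term; hence $\sigma_d$ is monotone in the number of columns, and it suffices to prove the bound for submatrices with exactly $m := \lceil(1-\gamma)n\rceil$ columns. The number of such submatrices is
\[
    \binom{n}{m} \;=\; \binom{n}{\,n-m\,} \;\le\; \binom{n}{\lfloor\gamma n\rfloor} \;\le\; (e/\gamma)^{O(\gamma n)} .
\]
With $n = d\log^2 d$ and $\gamma < 1/(128\log d)$ we have $\gamma n < d\log d/128$ and $\log(1/\gamma) = O(\log\log d)$, so $\log\binom{n}{m} = O(d\log d\,\log\log d) = o(d\log^2 d)$.

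\emph{A single submatrix.} Any such submatrix $\bA_i$ is a $d\times m$ matrix with i.i.d.\ $N(0,1)$ entries, $m\ge d$, and the standard lower-tail bound (for instance via Gordon's comparison inequality together with Gaussian concentration of the smallest singular value) gives
\[
    \Pr\!\big(\sigma_d(\bA_i)\le\sqrt{m}-\sqrt{d}-t\big)\;\le\; e^{-t^2/2},\qquad t\ge 0 .
\]
Since $m\ge(1-\gamma)n$ with $\gamma<1/(128\log d)$, we get $\sqrt{m}\ge\sqrt{d}\log d-\sqrt{d}/128$; taking $t$ so that $\sqrt{m}-\sqrt{d}-t=\sqrt{d\log^2 d/32}$ gives
\[
    t \;\ge\; \big(1-\tfrac1{\sqrt{32}}\big)\sqrt{d}\log d - \big(1+\tfrac1{128}\big)\sqrt{d} \;=\; \Omega\!\big(\sqrt{d}\log d\big)
\]
for $d$ large, so that $t^2/2 \ge c_0\,d\log^2 d$ for a constant $c_0$ comfortably larger than $1/64$.

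\emph{Union bound and consequences.} Combining the previous two steps, the probability that $\sigma_d(\bA_i) < \sqrt{d\log^2 d/32}$ for some relevant submatrix is at most
\[
    \binom{n}{m}\,e^{-t^2/2} \;\le\; \exp\!\big(o(d\log^2 d) - c_0\,d\log^2 d\big) \;\le\; 2\exp\!\big(-d\log^2 d/128\big)
\]
for $d$ large enough, since $c_0 d\log^2 d$ dominates the $o(d\log^2 d)$ entropy and $c_0 > 1/128$. On the complementary event, every such submatrix satisfies $\sigma_d(\bA_i)^2 \ge d\log^2 d/32 = n/32$, and therefore
\[
    \tfrac1n\,\bA_i\bA_i^T \;\succeq\; \tfrac1n\,\sigma_d(\bA_i)^2\,\bI \;\succeq\; \tfrac1{32}\bI ,
\]
which is the claim with $c_\tau = 1/32$. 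The corresponding statement for the whitened submatrices $\tilde\bA_i = \bC_\cA^{-1/2}\bA_i$ follows by conjugation: $\tfrac1n\tilde\bA_i\tilde\bA_i^T = \bC_\cA^{-1/2}\big(\tfrac1n\bA_i\bA_i^T\big)\bC_\cA^{-1/2} \succeq \tfrac1{32}\bC_\cA^{-1} \succeq \tfrac{1-o(1)}{32}\bI$, using $\|\bC_\cA-\bI\|_2 = o(1)$ from Lemma~\ref{lem:cov_conc}.

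\emph{Main obstacle.} The skeleton is routine, so the real effort lies in the constant bookkeeping: checking that $\gamma<1/(128\log d)$ makes $\sqrt m$ agree with $\sqrt{d}\log d$ to leading order, that the $-\sqrt d$ deficit in the Gaussian singular-value bound is lower order against $\sqrt d\log d$, and --- most importantly --- that the combinatorial entropy $\gamma n\log(1/\gamma) = O(d\log d\,\log\log d)$ is strictly dominated by $t^2/2 = \Theta(d\log^2 d)$. This last point is exactly where the extra logarithmic factor in the sample size $n = d\log^2 d$ (rather than $n\asymp d\log d$) is genuinely needed, and it is what forces the admissible range $\gamma < 1/(128\log d)$ and fixes the constant $1/128$ in the failure probability.
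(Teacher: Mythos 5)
Your proposal is correct and follows essentially the same route as the paper: the Gaussian lower-tail bound $\Pr\big(\sigma_d(\bA_i)\le\sqrt m-\sqrt d-t\big)\le e^{-t^2/2}$ for a single $d\times m$ submatrix combined with a union bound over the $\binom{n}{m}$ submatrices, with the choice $n=d\log^2 d$ and $\gamma<1/(128\log d)$ making the combinatorial entropy negligible against $t^2/2=\Theta(d\log^2 d)$. Your bookkeeping differs only cosmetically (monotonicity reduction to exactly $m$ columns, the tighter entropy bound $\gamma n\log(e/\gamma)$ instead of the paper's $(n-m)\log n$, and an asymptotic comparison in place of the paper's inequality $\sqrt{x}-\sqrt{y}\ge\sqrt{x/2-y}$), and both arguments land on the same constants.
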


\begin{proof}

Let $\bA$ be a $d \times n$ matrix whose columns are $\ba_i \overset{i.i.d.}{\sim} N(\bzero, \bI)$. For any submatrix of $m> d$ columns $\bA_i$, we have
\begin{equation}
    \Pr(\sigma_d(\bA_i) \geq \sqrt{m}-\sqrt{d} - t) \geq 1- 2\exp(-t^2/2).
\end{equation}
In particular, 
\begin{align*}
    \Pr&\Big(\sigma_d(\bA_i) \geq \sqrt{m}-\sqrt{d} - t, \ \forall \ \bA_i\Big) \\
    &\geq 1- 2{n \choose m}\exp(-t^2/2)\\
    &\geq 1- 2\exp(-t^2/2 + (n-m)\log(n)).
\end{align*}
Using $\sqrt{x}-\sqrt{y} \geq \sqrt{x/2-y}$ for $x \geq 2y$, if $m \geq 2d$, then
\begin{align*}
    \Pr&\Big(\sigma_d(\bA_i) \geq \sqrt{m/2-d} - t, \ \forall \ \bA_i\Big) \\
    &\geq 1- 2\exp(-t^2/2 + (n-m)\log(n)).
\end{align*}
Let $m = (1-\gamma)n$. Then,
\begin{align*}
    \Pr&\Big(\sigma_d(\bA_i) \geq \sqrt{(1-\gamma)n/2-d} - t, \ \forall \ \bA_i\Big) \\
    &\geq 1- 2\exp(-t^2/2 + \gamma n\log(n)).
\end{align*}

    Now let $n = d \log^2 d$. For $(1-\gamma)/2 < 1/4$ and $\log d \geq 3$, we have
\begin{align*}
    \Pr&\Big(\sigma_d(\bA_i) \geq \sqrt{d \log^2 d/8} - t, \ \forall \ \bA_i\Big) \\
    &\geq 1- 2\exp(-t^2/2 + \gamma d \log^3 d).
\end{align*}
Letting $t = \frac{1}{2} \sqrt{d \log^2 d / 8}$,
\begin{align*}
    \Pr&\Big(\sigma_d(\bA_i) \geq \sqrt{d\log^2 d / 32}, \ \forall \ \bA_i\Big) \\
    &\geq 1- 2\exp(-\frac{d \log^2 d}{64} + \gamma d \log^3 d).
\end{align*}
Thus, for $n = d \log^2 d$, we see that $\sigma_d(\bA_i) = \Omega(\sqrt{d \log^2 d})$ for all subsets of size $m = (1-\gamma) d \log^2 d$ provided that $\log d \geq 3$ and $\gamma \leq 1/(128 \log d)$. Thus, the lemma follows.
\end{proof}

\begin{proof}[Proof of Proposition \ref{prop:good_subset}]
    The proof of Proposition \ref{prop:good_subset} follows from Lemmas \ref{lem:cov_conc}, \ref{lem:norma}, \ref{lem:bern_conc_whiten},  and \ref{lem:subsetsingval}. 
\end{proof}

\subsubsection{Proof of Proposition \ref{prop:scsm}}
\label{subsubsec:scsm}

We now proceed with the proof of Proposition \ref{prop:scsm}.

\begin{proof}[Proof of Proposition~\ref{prop:scsm}]
\begin{equation}
    F_{\epsilon,w}(\bu) = \frac{1}{2n} \sum_{i=1}^n \left(\sqrt{(\tilde \ba_i^T \bu)^2 + \epsilon \|\tilde \ba_i\|^2} - \sqrt{y_i^2 + \epsilon \|\tilde \ba_i\|^2}\right)^2
\end{equation}
Let $\bu_t = (1-t) \bu_0 + t \bu_1$.
Let $r_i(\bu) = \left(\sqrt{(\tilde \ba_i^T \bu)^2 + \epsilon \|\tilde \ba_i\|^2} - \sqrt{y_i^2 + \epsilon \|\tilde \ba_i\|^2}\right)$. We compute the derivative
\begin{align*}
    \partial_t F_{\epsilon,w}(\bu_t)|_{t=0} &= \frac{1}{n} \sum_i r_i(\bu_0) \frac{\tilde \ba_i^T \bu_0 \dot \bu_0^T \tilde \ba_i }{\sqrt{\tilde \ba_i^T \bu_t0 \bu_0^T \tilde \ba_i + \epsilon \|\tilde \ba_i\|^2}} \\
    &= \Big \langle \frac{1}{n} \sum_i r_i(\bu_0) \frac{\tilde \ba_i \tilde \ba_i^T \bu_0   }{\sqrt{\tilde \ba_i^T \bu_t \bu_0^T \tilde \ba_i + \epsilon \|\tilde \ba_i\|^2}}, \dot \bu_0^T \Big \rangle.
\end{align*}

We compute and simplify the second derivative
\begin{align*}
    \partial_t^2 &F_{\epsilon,w}(\bu_t) =  \frac{1}{n} \sum_i   \cancel{\frac{(\tilde \ba_i^T \bu_t \dot \bu_t^T \tilde \ba_i)^2 }{\tilde \ba_i^T \bu_t \bu_t^T \tilde \ba_i + \epsilon \|\tilde \ba_i\|^2}}\\
    &-  \Big(\cancel{\sqrt{\tilde \ba_i^T \bu_t \bu_t^T \tilde \ba_i + \epsilon \|\tilde \ba_i\|^2}} \\
    &\ \ - \sqrt{y_i^2 + \epsilon \|\tilde \ba_i\|^2}\Big) \frac{(\tilde \ba_i^T \bu_t \dot \bu_t^T \tilde \ba_i)^2 }{(\tilde \ba_i^T \bu_t \bu_t^T \tilde \ba_i + \epsilon \|\tilde \ba_i\|^2)^{3/2}} \\
    & + r_i(\bu_t) \frac{\tilde \ba_i^T \dot \bu_t \dot \bu_t^T \tilde \ba_i }{\sqrt{\tilde \ba_i^T \bu_t \bu_t^T \tilde \ba_i + \epsilon \|\tilde \ba_i\|^2}} \\
    &= \underbrace{\frac{1}{n} \sum_i   \left(  \sqrt{y_i^2 + \epsilon \|\tilde \ba_i\|^2}\right) \frac{(\tilde \ba_i^T \bu_t \dot \bu_t^T \tilde \ba_i)^2 }{(\tilde \ba_i^T \bu_t \bu_t^T \tilde \ba_i + \epsilon \|\tilde \ba_i\|^2)^{3/2}}}_{\mathrm{(I)}} \\
    & + \underbrace{\frac{1}{n} \sum_i\left(1-\frac{\sqrt{y_i^2 + \epsilon \|\tilde \ba_i\|^2} }{\sqrt{\tilde \ba_i^T \bu_t \bu_t^T \tilde \ba_i + \epsilon \|\tilde \ba_i\|^2}}\right) \tilde \ba_i^T \dot \bu_t \dot \bu_t^T \tilde \ba_i  }_{\mathrm{(II)}}
\end{align*}
The second derivative is divided into two terms. The first term is nonnegative, while the second term may be positive or negative. We will bound each of these sums separately at $t=0$. In essence, we will show that for well-chosen $\epsilon$ depending on $\delta$, the first term approaches $\|\dot \bu_0\|$ while the second term goes to zero as $\delta \to 0$.

Before we bound these terms, we clarify our setting. First, we condition on the event that $\|\tilde \ba_i\|^2 \in (1 \pm c_1) d$ for a small $c_1$, which occurs with high probability by Lemma \ref{lem:norma}. Also, we assume without loss of generality that $\|\bu_\star\|=1$. In the case where the norm is not 1, all results translate by rescaling. Finally, under the assumption that $\|\bu_0 - \bu_\star\| \leq \delta \|\bu_\star\| = \delta$, we can upper and lower bound $|\bu_0^T \tilde \ba_i|^2$:
\begin{align}\label{eq:u0lb}
    |\bu_0^T \tilde \ba_i|^2 &=  |(\bu_0^T -  \bu_\star) \tilde \ba_i + \bu_\star \tilde \ba_i|^2 \\ \nonumber &\geq |\bu_\star \tilde \ba_i|^2 -2\delta |\bu_\star \tilde \ba_i|  \|\tilde \ba_i\| \\ \nonumber &
    \geq y_i^2 - 2\delta \|\tilde \ba_i\|^2
\end{align}
and
\begin{align}\label{eq:u0ub}
    |\bu_0^T \tilde \ba_i|^2 
    \leq  |\bu_\star \tilde \ba_i|^2 + (2\delta+\delta^2) \|\tilde \ba_i\|^2.
\end{align}
These bounds will be used in our upper and lower bounds for terms (I) and (II).

\noindent
\textbf{Bound for (II):} We proceed first with bounding the second term, since it is simpler. Using \eqref{eq:u0lb}, we can upper bound the ratio
\begin{align*}
    \frac{\sqrt{y_i^2 + \epsilon \|\tilde \ba_i\|^2}}{\sqrt{ |\bu_0^T \tilde \ba_i|^2 + \epsilon \|\tilde \ba_i\|^2}} 
    \leq \sqrt{\frac{\epsilon}{\epsilon-2\delta}}.
\end{align*}
Similarly, using \eqref{eq:u0ub}, we can lower bound the ratio
\begin{align*}
    \frac{\sqrt{y_i^2 + \epsilon \|\tilde \ba_i\|^2}}{\sqrt{\tilde \ba_i^T \bu_0 \bu_0^T \tilde \ba_i + \epsilon \|\tilde \ba_i\|^2}} 
    \geq \sqrt{\frac{\epsilon}{\epsilon+3\delta}}.
\end{align*}
These follow from the fact that these functions are monotonic with respect to $y_i^2$ and taking appropriate limits.

Thus,
\begin{align}\label{eq:term_ii_lb}
    &\frac{1}{n} \sum_i \left(1-\frac{\sqrt{y_i^2 + \epsilon \|\tilde \ba_i\|^2} }{\sqrt{\tilde \ba_i^T \bu_0 \bu_0^T \tilde \ba_i + \epsilon \|\tilde \ba_i\|^2}}\right) \tilde \ba_i^T \dot \bu_0 \dot \bu_0^T \tilde \ba_i \geq \\ \nonumber
    & (1- \sqrt{\frac{\epsilon}{\epsilon-2\delta}})\|\dot\bu_0\|^2   
\end{align}
and
\begin{align} \label{eq:term_ii_ub}
    &\frac{1}{n} \sum_i \left(1-\frac{\sqrt{y_i^2 + \epsilon \|\tilde \ba_i\|^2} }{\sqrt{\tilde \ba_i^T \bu_0 \bu_0^T \tilde \ba_i + \epsilon \|\tilde \ba_i\|^2}}\right) \tilde \ba_i^T \dot \bu_0 \dot \bu_0^T \tilde \ba_i \leq \\ \nonumber
    & (1- \sqrt{\frac{\epsilon}{\epsilon+3\delta}})\|\dot\bu_0\|^2   
\end{align}
Thus, choosing $\delta = o(\epsilon)$, we see that this term vanishes as $\delta \to 0$.

\noindent
\textbf{Bound for (I):} We begin our lower bound of the first term by applying \eqref{eq:u0lb} 
\begin{align*}
    \frac{1}{n} \sum_i &\left(  \sqrt{y_i^2 + \epsilon \|\tilde \ba_i\|^2}\right) \frac{(\tilde \ba_i^T \bu_0 \dot \bu_0^T \tilde \ba_i)^2 }{(\tilde \ba_i^T \bu_0 \bu_0^T \tilde \ba_i + \epsilon \|\tilde \ba_i\|^2)^{3/2}} \geq \\
    &\sqrt{\frac{\epsilon}{\epsilon+3\delta}} \frac{1}{n} \sum_i \frac{(\tilde \ba_i^T \bu_0)^2 }{\tilde \ba_i^T \bu_0 \bu_0^T \tilde \ba_i + \epsilon \|\tilde \ba_i\|^2} ( \dot \bu_0^T \tilde \ba_i)^2
\end{align*}
By Proposition \ref{prop:good_subset}, for a small $\tau$, there is a subset $\check \cA(\bu_0)$ of $\check \cA$ such that with high probability, $\frac{\tilde \ba_i^T}{\|\tilde \ba_i\|} \bu_0 > \tau $ for all $\ba_i \in \check \cA$, and $\frac{1}{n} \sum_{\check \cA(\bu)} \bC_{\cA}^{-1/2}\ba_i \ba_i^T \bC_{\cA}^{-1/2} = \Theta(\bI)$.

Thus, we see that for any choice of $\bu_0$, $\dot \bu_0$,
\begin{align}\label{eq:term_i_lb}
    \frac{1}{n} \sum_i &\frac{(\tilde \ba_i^T \bu_0)^2 }{\tilde \ba_i^T \bu_0 \bu_0^T \tilde \ba_i + \epsilon \|\tilde \ba_i\|^2} ( \dot \bu_0^T \tilde \ba_i)^2 \\
    &= \frac{1}{n} \sum_{\check{\cA}(\bu_0)} \frac{(\tilde \ba_i^T \bu_0)^2 /\|\tilde \ba_i\|^2}{\tilde \ba_i^T \bu_0 \bu_0^T \tilde \ba_i / \|\tilde \ba_i\|^2 + \epsilon } ( \dot \bu_0^T \tilde \ba_i)^2 \\ \nonumber
    &\geq \frac{1}{n} \sum_{\check{\cA}(\bu_0)} \frac{\tau^2 }{\tau^2 + \epsilon} ( \dot \bu_0^T \tilde \ba_i)^2 \\ \nonumber
    &\geq \frac{1}{32}\frac{\tau^2 }{\tau^2 + \epsilon}  \|\dot \bu_0\|^2
\end{align}
Following Lemma

On the other hand, it is straightforward to show that
\begin{align}\label{eq:term_i_ub}
    \frac{1}{n} \sum_i &\frac{(\tilde \ba_i^T \bu_0)^2 }{\tilde \ba_i^T \bu_0 \bu_0^T \tilde \ba_i + \epsilon \|\tilde \ba_i\|^2} ( \dot \bu_0^T \tilde \ba_i)^2 \\ \nonumber
    &\leq \frac{1}{n} \sum_{i}  ( \dot \bu_0^T \tilde \ba_i)^2 \\ \nonumber
    &= \|\dot \bu_0\|^2
\end{align}

Suppose that we set $2^{1/4}\delta^{1/8} = \epsilon^{1/4} = \tau < 1/(128 \log d) $. Then one can show that
\[
    (2- \sqrt{\frac{\epsilon}{\epsilon+3\delta}})\leq 1+\epsilon, 1-\sqrt{\frac{\epsilon}{\epsilon-2\delta}} \geq -\epsilon.
\]
Furthermore choose $\tau^2 = \sqrt{\epsilon}$, we see that
\[
\frac{\tau^2 }{\tau^2 + \epsilon}\frac{1}{32}  \geq \frac{1}{32}(1-\sqrt{\epsilon}).
\]
Then, combining \eqref{eq:term_ii_ub} and \eqref{eq:term_i_ub}
\begin{align*}
    \partial_t^2 &F_{\epsilon,w}(\bu_t)|_{t=0} \leq (1+\epsilon) \|\dot \bu_0\|^2 
\end{align*}
On the other hand, combining \eqref{eq:term_ii_lb} and \eqref{eq:term_i_lb}, and using 
\begin{align*}
    \partial_t^2 &F_{\epsilon,w}(\bu_t)|_{t=0} \geq \Big(\frac{\tau^2 }{\tau^2 + \epsilon} c_\tau - (1- \sqrt{\frac{\epsilon}{\epsilon-2\delta}}) \Big] \|\dot \bu_0\|^2 \\
    &\geq (c_\tau(1-\sqrt{\epsilon})-\epsilon) \|\dot \bu_0\|^2.
\end{align*}
This implies that $F_{\epsilon,w}$ is $1+\epsilon$ smooth and $\frac{1}{32} - 2\sqrt{\epsilon} $ strongly convex. 

To finish the proof, we rely on the following lemma, which we state without proof. It essentially says that no sensing vectors are orthogonal to $\bu_\star$. This follows from the fact that the set of vectors orthogonal to $\bu_\star$ is measure zero. 
\begin{lemma}\label{lem:local_ball}
    With probability 1, there exists $r_\star > 0$ such that $\frac{\tilde \ba_i^T}{\|\tilde \ba_i\|}\bu_\star \geq 2 r_\star$ for all $i$. Consequently, for all $\bu \in B(\bu_\star, r_\star)$, we have $\frac{\tilde \ba_i^T}{\|\tilde \ba_i\|} \bu \geq r_\star$.
\end{lemma}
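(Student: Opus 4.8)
The plan is to unwind the whitening so that the event ``$\tilde\ba_i$ orthogonal to $\bu_\star$'' reduces to ``$\ba_i$ orthogonal to a fixed vector,'' which is a Lebesgue-null hyperplane event, and then to get the ``consequently'' part from a one-line Cauchy--Schwarz perturbation. The first thing I would pin down is that, inside $F_{\epsilon,w}$, the relevant quantity unwinds as $\tilde\ba_i^T\bu_\star = \ba_i^T\bv$ for a \emph{deterministic} vector $\bv\neq\bzero$ (the true signal in the original coordinates; this is exactly the bookkeeping behind the first lemma of Section~\ref{subsec:whiten} and behind the identity $|\tilde\ba_i^T\bu_\star| = y_i$ used in~\eqref{eq:u0lb}). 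With this in hand, $\{\,i : \tilde\ba_i^T\bu_\star = 0\,\} = \{\,i : \ba_i \perp \bv\,\}$.

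For the first claim, since $\ba_i\sim N(\bzero,\bI)$ has a density and $\{\bx : \bx^T\bv = 0\}$ is a hyperplane, hence Lebesgue-null because $\bv\neq\bzero$, we get $\Pr(\ba_i^T\bv = 0) = 0$ for each $i$; a union bound over the finitely many $i = 1,\dots,n$ yields $\tilde\ba_i^T\bu_\star\neq 0$ for all $i$ with probability $1$. On this event we also have $\|\tilde\ba_i\| > 0$ for every $i$ (almost surely $\bSigma_\cA\succ\bzero$ since $n > d$, so $\tilde\ba_i = \bSigma_\cA^{-1/2}\ba_i \neq \bzero$), so we may set $2r_\star := \min_{1\le i\le n}\frac{|\tilde\ba_i^T\bu_\star|}{\|\tilde\ba_i\|} > 0$, a minimum of finitely many strictly positive numbers. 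If one prefers not to invoke the whitening identity and instead treats $\tilde\ba_i^T\bu_\star = \ba_i^T\bSigma_\cA^{-1/2}\bu_\star$ with $\bSigma_\cA$ data-dependent, the same conclusion follows because $\prod_i\langle\ba_i,\bSigma_\cA^{-1/2}\bu_\star\rangle$ is real-analytic and not identically zero on the connected full-measure open set $\{\bSigma_\cA\succ\bzero\}$, whose nontrivial zero set is Lebesgue-null. The inequality in the statement is then read with an absolute value, or equivalently after the harmless relabeling $\ba_i\leftarrow-\ba_i$, which changes neither $y_i$ nor $F_{\epsilon,w}$.

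For the second claim, fix $\bu\in B(\bu_\star,r_\star)$. By Cauchy--Schwarz, $|\tilde\ba_i^T\bu - \tilde\ba_i^T\bu_\star| \le \|\tilde\ba_i\|\,\|\bu-\bu_\star\| \le \|\tilde\ba_i\|\,r_\star$, and therefore $\frac{|\tilde\ba_i^T\bu|}{\|\tilde\ba_i\|} \ge \frac{|\tilde\ba_i^T\bu_\star|}{\|\tilde\ba_i\|} - r_\star \ge 2r_\star - r_\star = r_\star$ for every $i$, which is the asserted bound. Nothing here uses Gaussianity beyond absolute continuity, nor the sample size beyond $n > d$.

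The one point that needs care—rather than a genuine obstacle—is the whitening bookkeeping: one must be explicit that $\tilde\ba_i^T\bu_\star$ unwinds to an inner product of $\ba_i$ with a vector that does \emph{not} depend on the sensing vectors (or else fall back on the analyticity remark above, the issue being that $\bSigma_\cA$ itself depends on $\ba_i$, so one cannot simply ``condition on the whitening matrix''). Once this is settled the lemma is immediate, matching the paper's own remark that it follows from the set of vectors orthogonal to $\bu_\star$ having measure zero.
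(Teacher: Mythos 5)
Your proof is correct and follows the same route the paper intends: the paper states Lemma~\ref{lem:local_ball} without proof, remarking only that it follows from the measure-zero nature of the set of vectors orthogonal to $\bu_\star$, and your argument is exactly that observation made rigorous (finitely many null hyperplane events, a strictly positive minimum over finitely many indices, and a Cauchy--Schwarz perturbation to pass to the ball $B(\bu_\star, r_\star)$). The additional care you take with the whitening bookkeeping (unwinding $\tilde \ba_i^T \bu_\star$ to a deterministic inner product, with the analyticity fallback if one insists on the data-dependent $\bSigma_{\cA}^{-1/2}$) and with the sign/absolute-value convention fills in details the paper leaves implicit and is consistent with its use of $|\tilde \ba_i^T \bu_\star| = y_i$ in \eqref{eq:u0lb}.
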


Using this, once $\|\bu_0 - \bu_\star\| \leq r_\star$, we get that $\frac{\tilde \ba_i^T}{\|\tilde \ba_i}\bu_0 \geq r_\star$ for all $i$. Repeating the same argument as before, we see that $F_{\epsilon,w}$ is still $1+\epsilon$ smooth but now it is $1-2\epsilon$ strongly convex. In particular, locally $F_{\epsilon,w}$ has strong convex and smoothness that tend to 1 as $\epsilon=2\sqrt{\delta} \to 0$
\end{proof}

\section{Numerical Experiments}

We evaluate the performance of our proposed algorithm on synthetic phase retrieval problems. All experiments are conducted in dimension $d = 200$. The true signal $\bu_\star$ is set to the constant unit-norm vector ${\bone}/{\sqrt{d}}$. The sensing vectors ${\ba_i}$ are sampled i.i.d. from the standard normal distribution. Unless otherwise stated, we average results over 100 independent trials.

We compare three methods:
\begin{itemize}
    \item BWGD-DS (Loss): BWGD-DS with the loss heuristic from \eqref{eq:eps_update_theoretical};
    \item BWGD-DS (Quantile): BWGD-DS with the quantile heuristic from \eqref{eq:eps_update_empirical};
    \item BWGD: Regular Bures-Wasserstein gradient descent with no smoothing.
\end{itemize}

Figure~\ref{fig:slice} compares the convergence rate of BWGD and BWGD-DS with our two heuristics. For all methods, we use the spectral initialization of \cite{gao2017phaseless}, and we plot the error versus iteration for a single generated dataset where $n=650$. We observe that both BWGD-DS (Quantile) and BWGD converge rapidly within a few iterations, exhibiting clear superlinear convergence in line with what one expects from Newton's method. BWGD-DS (Loss) also converges superlinearly, though it does not converge rapidly until the method reaches a local neighborhood.

\begin{figure}[h]
    \centering
    \includegraphics[width=0.7\columnwidth]{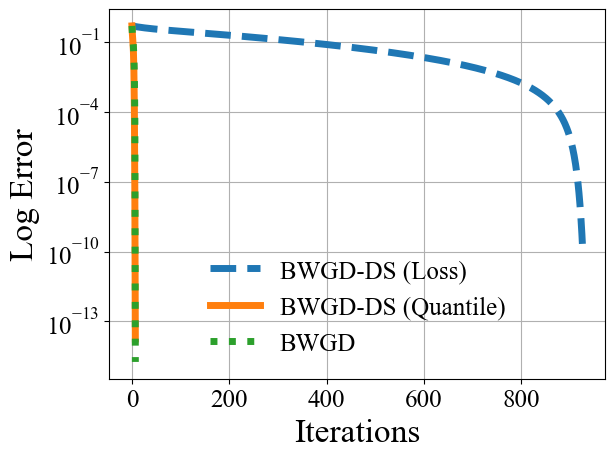}
    \caption{Log error $\log(|\bu_t - \bu_\star|)$ vs. iteration at $n = 650$, for a single trial. All methods initialized with spectral initialization}
    \label{fig:slice}
\end{figure}

The second experiment in Figure~\ref{fig:iterations} depicts how the loss heuristic update requires substantially more iterations to converge compared to the other two methods. This is consistent with its update method, which initially uses larger smoothing parameters, slowing early progress. In contrast, when successful, both BWGD and BWGD-DS (Quantile) exhibit much faster convergence, and their iteration counts are nearly indistinguishable at higher sample sizes. We see that all methods converge faster as the sample size grows.

\begin{figure}[h]
\centering
\includegraphics[width=0.7\columnwidth]{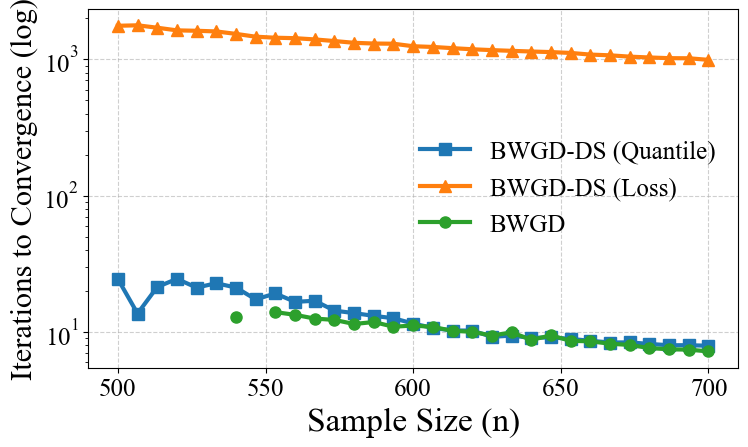}
\caption{Average number of iterations until convergence vs. number of samples $n$, using spectral initialization. Only successful trials are included.}
\label{fig:iterations}
\end{figure}

The third experiment in Figure~\ref{fig:success} examines the stability of BWGD-DS by examining how often it successfully converges for various sample sizes. A trial is deemed to have converged if the error, defined as $\|\bu_t - \bu_\star\|$, is less than $10^{-9}$. In the upper plot, we look at success rate versus sample size for spectral initialization, and in the bottom plot, we look at success rate versus sample size for random initialization. In both figures, we see that BWGD-DS with the loss heuristic achieves higher success rates when compared to other methods for smaller sample sizes. In other words, the loss heuristic has greater stability in more challenging examples with low sample sizes. 

In the upper plot of Figure \ref{fig:success}, where we use spectral initialization, BWGD and BWGD-DS (Quantile) only begin to converge reliably at around $n = 500$, at which point the theoretical approach already achieves near-perfect success rates. BWGD-DS (Quantile) still exhibits greater stability than BWGD in both cases. We note that, despite its greater stability, BWGD-DS (Loss) requires many more iterations to converge in general (greater than 1000 for BWGD-DS (Loss) versus less than 100 for BWGD-DS (Quantile)). In the lower plot of Figure~\ref{fig:success}, where we use random initialization, all methods now require significantly more samples to succeed. BWGD-DS (Loss) remains the most effective, achieving successful recovery at a lower threshold than the other methods. 

\begin{figure}[h]
\centering
\includegraphics[width=0.7\columnwidth]{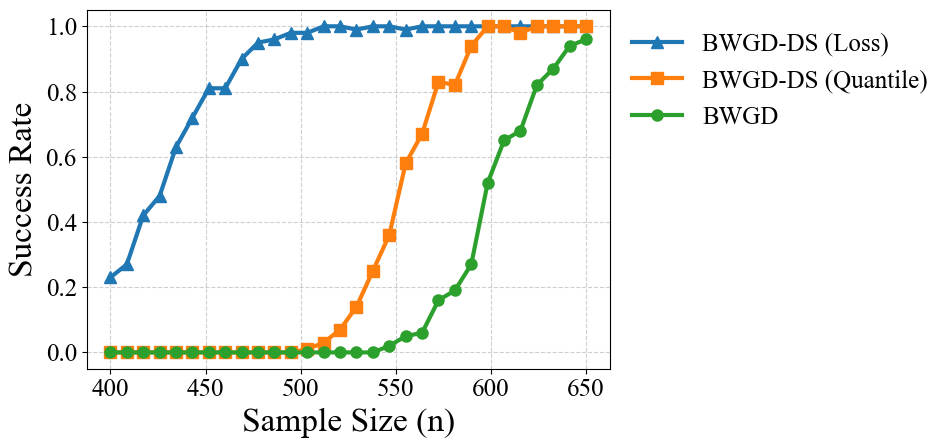}
\includegraphics[width=0.7\columnwidth]{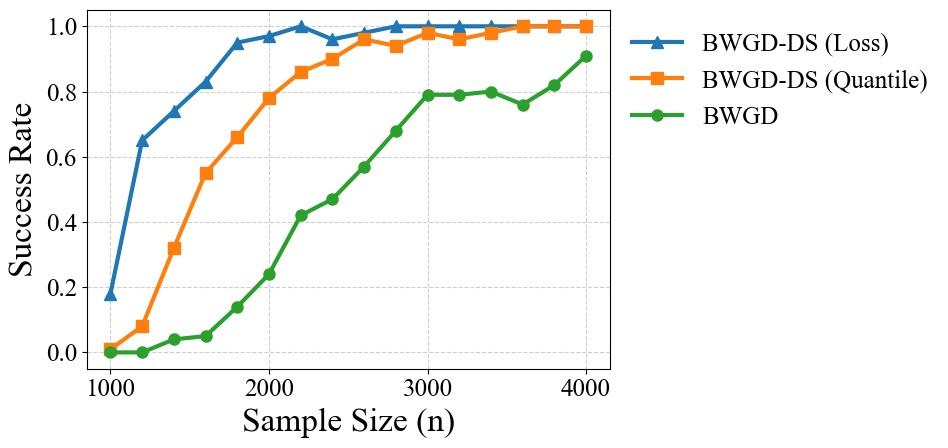}
\caption{Success rate vs. number of samples for the three BWGD variants. Success is defined if the method converges to a point with error less than $10^{-9}$. Top: methods run with spectral initialization. Bottom: methods run with random initialization. We see that the BWGD-DS (Loss) achieves the greatest stability, followed by BWGD-DS (Quantile) and then BWGD. This is at the cost of BWGD-DS (Loss) requiring significantly more iterations (greater than 1000 for BWGD-DS (Loss) versus less than 100 for BWGD-DS (Quantile) and BWGD).}
\label{fig:success}
\end{figure}

Figure~\ref{fig:convergence_time_grid} reports heatmaps of the log-error of each method across different sample sizes and iterations. Here, we display the average log error (or, log geometric mean) of the errors over 100 trials for each method. For both initialization methods, the results are similar to what we showed before. BWGD-DS (Loss) achieves superlinear convergence even at small sample sizes—earlier than the other methods, albeit at the cost of requiring significantly more iterations. The BWGD-DS (Quantile) has a rapid decrease in error once the sample size exceeds a threshold. Without dynamic smoothing, BWGD is fast when it converges, especially with spectral initialization. However, it struggles to succeed at lower sample sizes, particularly with random initialization.

\begin{figure}[htbp]
  \centering
  \begin{minipage}{\textwidth}
    \begin{minipage}{0.24\textwidth}
      \centering
      \text{Spectral}
    \end{minipage}
    \begin{minipage}{0.24\textwidth}
      \centering
      \text{Random}
    \end{minipage}
  \end{minipage}

  \begin{subfigure}[t]{0.24\textwidth}
    \centering
    \includegraphics[width=\linewidth]{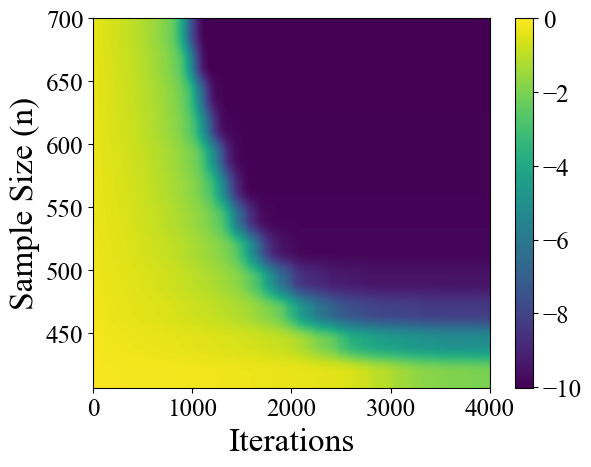}
  \end{subfigure}
  \hfill
  \begin{subfigure}[t]{0.24\textwidth}
    \centering
    \includegraphics[width=\linewidth]{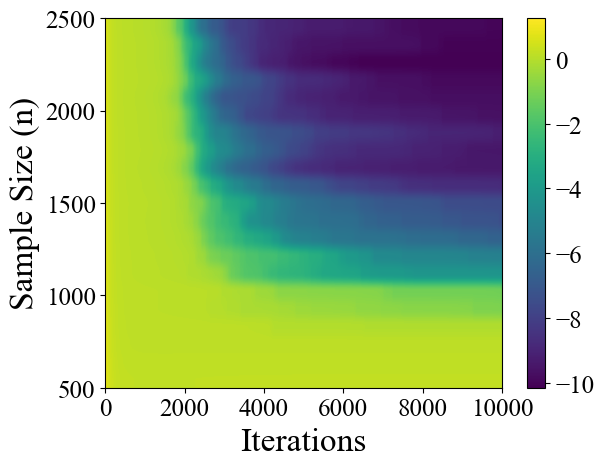}
  \end{subfigure}

  \begin{subfigure}[t]{0.24\textwidth}
    \centering
    \includegraphics[width=\linewidth]{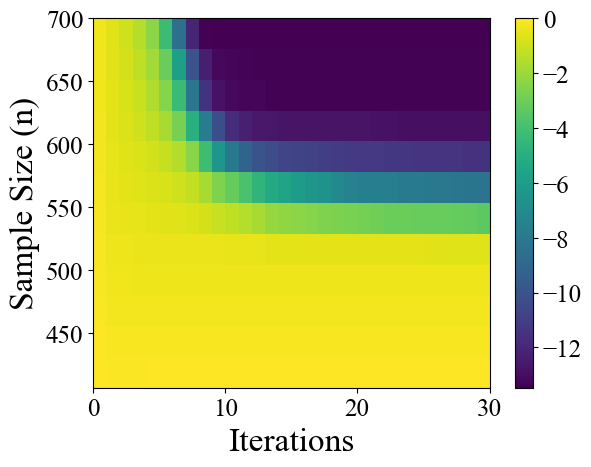}
  \end{subfigure}
  \hfill
  \begin{subfigure}[t]{0.24\textwidth}
    \centering
    \includegraphics[width=\linewidth]{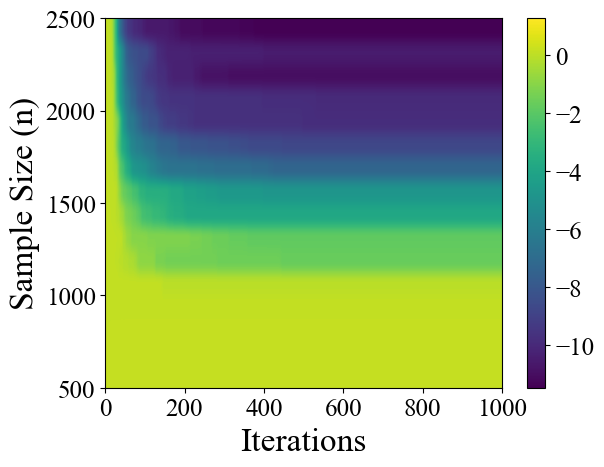}
  \end{subfigure}

  \begin{subfigure}[t]{0.24\textwidth}
    \centering
    \includegraphics[width=\linewidth]{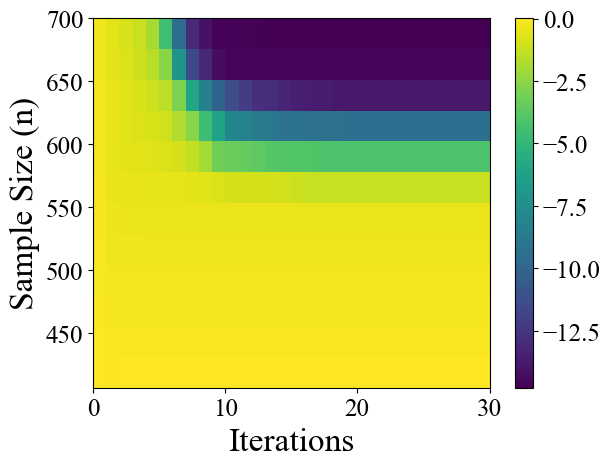}
  \end{subfigure}
  \hfill
  \begin{subfigure}[t]{0.24\textwidth}
    \centering
    \includegraphics[width=\linewidth]{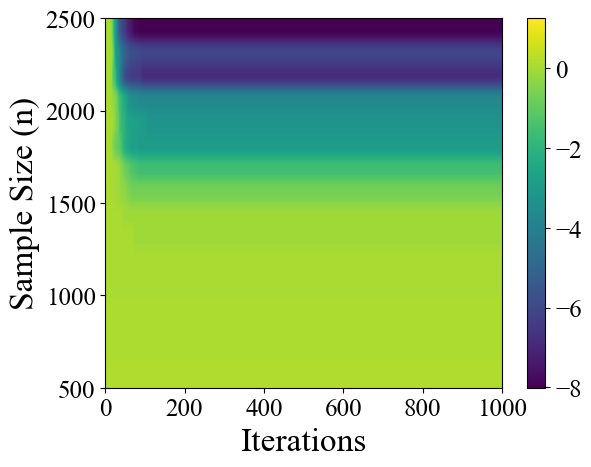}
  \end{subfigure}
  \caption{Heatmaps depicting the log of geometric mean error across 100 trials for sample size versus iteration. The left column uses spectral initialization and the right column uses random initialization. The top row is BWGD-DS (Loss), the middle row is BWGD-DS (Quantile), and bottom row is BWGD respectively. BWGD-DS (Loss) converges significantly slow than the other two methods, but is able to converge for slightly smaller sample sizes. BWGD-DS (Quantile) strikes a balance between speed and stability for small sample sizes when compared to BWGD.}
  \label{fig:convergence_time_grid}
\end{figure}

\section{Conclusion}

We have presented an analysis of BWGD for real phase retrieval. We have demonstrated that BWGD is indeed a nonsmooth Newton method for this problem. To stabilize this method, we introduce a dynamic smoothing procedure that achieves superlinear convergence. Experimental results demonstrate the superior stability of our regularized method compared to BWGD on the nonsmooth objective. In particular, the BWGD-DS with the loss heuristic exihibits the highest stability but slowest convergence, while BWGD-DS with the quantile heuristics demonstrates increased stability over BWGD without sacrificing on convergence rate.

For future work, it would be of interest to tighten the results here and prove optimal ways of setting the sequence of regularization parameters $\epsilon_t$. Another way to tighten the results here would be to consider a two-stage algorithm that combines the benefits of gradient descent and Newton's method. It would be interesting to extend these ideas and to develop stable Newton methods for complex and higher-rank nonsmooth matrix recovery problems. Finally, our method is reminiscent of quasi-Newton methods, where we construct a sequence of steps that approach the Newton step asymptotically. It would be interesting, therefore, to consider new smoothing approaches for quasi-Newton methods as alternatives to more classical approaches \citep{chen1997superlinear}.

\section{Acknowledgements}

T. Maunu was supported by the NSF under Award No. 2305315.

\bibliographystyle{plainnat}
\bibliography{refs}

\end{document}